\documentclass[
]{ceurart}

\usepackage{amsmath}
\usepackage{graphicx}
\usepackage{microtype}
\usepackage{dsfont}
\usepackage{stmaryrd}
\usepackage{tikz} 
\usepackage{comment}
\usetikzlibrary{shapes.misc,automata,positioning,arrows}
\usepgflibrary{shapes.arrows} 
\usepackage{xcolor}
\usepackage{comment}
\usepackage{mathrsfs}
\usepackage{latexsym}
\usepackage{subcaption}

\usepackage{multicol}

\usepackage{listings}
\usepackage{amsthm}

\newtheorem{lemma}{Lemma}
\newtheorem{definition}{Definition}
\newtheorem{example}{Example}
\newtheorem{proposition}{Proposition}
\newtheorem{corollary}{Corollary}

\begin{document}


\newcommand{\ie}{\mbox{i.e.}}
\newcommand{\eg}{\mbox{e.g.}}
\newcommand{\cf}{\mbox{cf.}}
\newcommand{\viz}{\mbox{viz.}}
\newcommand{\wrt}{\mbox{w.r.t.}}
\newcommand{\st}{\mbox{s.t.}}
\newcommand{\wolog}{\mbox{w.l.o.g.}}
\newcommand{\etal}{\mbox{et al}}
\newcommand{\aka}{\mbox{a.k.a.}}
\newcommand{\myskip}{\smallskip} 
\newcommand{\df}[1]{\textbf{#1}} 


\newcommand{\Lang}{\mathcal{L}}
\newcommand{\KB}{\mathcal{K}}


\newcommand{\dentails}{\mid\hskip-0.40ex\approx}   
\newcommand{\ndentails}{\not\mid\hskip-0.40ex\approx}

\newcommand{\ptwiddle}{\mathrel|\joinrel\sim}        
\newcommand{\nptwiddle}{\mathrel|\joinrel\not\sim}   
\newcommand{\twiddle}{\leadsto}        
\newcommand{\ntwiddle}{\not\leadsto}   

\newcommand*{\myleftmid}{%
	\mathrel{\vcenter{\offinterlineskip
			\vskip-0.25ex\hbox{$\shortmid$}}}}
\newcommand*{\myrightmid}{%
	\mathrel{\vcenter{\offinterlineskip
			\vskip-0.7ex\hbox{$\shortmid$}}}}
\newcommand*{\twosim}{%
	\mathrel{\vcenter{\offinterlineskip
			\vskip0.05ex\hbox{$\sim$}\vskip0.25ex\hbox{$\sim$}}}}
\newcommand{\bartwosim}{\mathrel{\myleftmid}\hskip-0.03ex\joinrel\twosim}
\newcommand{\dnec}{\mathrel{\bartwosim}\hskip-.05ex\joinrel\myrightmid}
\newcommand{\dposs}{\scalebox{0.8}{\raisebox{-0.2ex}{\rotatebox{57}{\ensuremath{\dnec}}}}\hskip-0.3ex}

\newcommand{\Lflag}{\Lang^{\scalebox{0.5}{${\dnec}$}}}
\newcommand{\Langd}{\widetilde{\Lang}}
\newcommand{\LangdSt}{\Langd_{\mathds{S}}}
\newcommand{\LangProp}{\Lang^{\ptwiddle}}

\newcommand{\dimp}{\leadsto}
\newcommand{\ndimp}{\not\leadsto}

\newcommand{\SSS}{\mathcal{S}}
\newcommand{\T}{\mathcal{T}}
\newcommand{\B}{\mathcal{B}}

\newcommand{\states}[1]{\llbracket #1 \rrbracket}

\newcommand{\Nick}[1]{\textcolor{orange}{#1}}
\newcommand{\Ivan}[1]{\textcolor{blue}{#1}}
\newcommand{\Tommie}[1]{\textcolor{red}{#1}}


\newcommand{\animal}{\ensuremath{\mathsf{animal}}}
\newcommand{\cheese}{\ensuremath{\mathsf{cheese}}}
\newcommand{\egg}{\ensuremath{\mathsf{egg}}}
\newcommand{\Env}{\ensuremath{\mathsf{Environmentalist}}}
\newcommand{\Pcf}{\ensuremath{\mathsf{Pacifist}}}
\newcommand{\Vga}{\ensuremath{\mathsf{Vegan}}}
\newcommand{\Vgt}{\ensuremath{\mathsf{Vegetarian}}}
\newcommand{\animalF}{\ensuremath{\mathsf{a}}}
\newcommand{\cheeseF}{\ensuremath{\mathsf{c}}}
\newcommand{\eggF}{\ensuremath{\mathsf{e}}}
\newcommand{\EnvF}{\ensuremath{\mathsf{Env}}}
\newcommand{\PcfF}{\ensuremath{\mathsf{Pcf}}}
\newcommand{\VgaF}{\ensuremath{\mathsf{Vga}}}
\newcommand{\VgtF}{\ensuremath{\mathsf{Vgt}}}

\newcommand{\Phy}{\ensuremath{\mathsf{Phy}}}
\newcommand{\PhyF}{\ensuremath{\mathsf{P}}}
\newcommand{\Ast}{\ensuremath{\mathsf{Astro}}}
\newcommand{\AstF}{\ensuremath{\mathsf{A}}}
\newcommand{\Eng}{\ensuremath{\mathsf{Eng}}}
\newcommand{\EngF}{\ensuremath{\mathsf{E}}}

\newcommand{\NL}{\ensuremath{\mathsf{NL}}}
\newcommand{\VG}{\ensuremath{\mathsf{VG}}}
\newcommand{\inertia}{\ensuremath{\mathsf{i}}}
\newcommand{\NLF}{\ensuremath{\mathsf{n}}}
\newcommand{\VGF}{\ensuremath{\mathsf{v}}}

\newcommand{\I}{\mathcal{I}}
\newcommand{\R}{\mathcal{R}}

\newcommand{\naf}{\mathrm{not }}

\copyrightyear{2026}
\copyrightclause{Copyright for this paper by its authors.
  Use permitted under Creative Commons License Attribution 4.0
  International (CC BY 4.0).}

\conference{ }

\title{Towards Non-Monotonic Entailment in Propositional Defeasible Standpoint Logic}

\author[1]{Nicholas Leisegang}
\cormark[1]
\address[1]{University of Cape Town and CAIR, South Africa}
\author[1]{Thomas Meyer}
\author[2,1,3]{Ivan Varzinczak}
\address[2]{Université Sorbonne Paris Nord, Inserm, Sorbonne Université, Limics, 93017 Bobigny, France}
\address[3]{ISTI-CNR, Pisa, Italy}

\cortext[1]{Corresponding author.}

\begin{abstract}
Recent work in defeasible reasoning has seen notions of preferential semantics and entailment in the style of Kraus et al. applied to modal logics. However, work in this field has focussed primarily on satisfiability checking, and monotonic notions of entailment, which may be inferentially weak.  One particular modal logic where this has been introduced is propositional standpoint logics, where modalities can express the views of different viewpoints. This has resulted in the  formalisation of propositional defeasible standpoint logic (PDSL). In this paper, we propose a means of lifting the class of (non-monotonic) rational entailment relations from traditional KLM-style reasoning to a fragment of PDSL. In order to do so, we extend the expressivity of PDSL via situated standpoint conditionals, allowing us to talk about a defeasible conditional holding in the context of a given standpoint. This allows us to re-characterise the syntax of PDSL in terms of situated conditionals, and shows that a large fragment of PDSL is expressible as a set of situated conditionals. We then focus on characterising non-monotonic entailment in this fragment, defining a method to transport any ranking-based entailment relation from the propositional case into the PDSL case. This is first described in the general case and then considered in the specific cases of rational and lexicographic closures, providing a faithful translation of each inference into PDSL. We also show that entailment-checking in this fragment of PDSL can be done largely using algorithms from the propositional case, while preserving complexity bounds.
\end{abstract}

\begin{keywords}
Standpoint Logics \sep Defeasible Reasoning \sep Modal Logics \sep Preferential Semantics
\end{keywords}

\maketitle

\medskip


\section{Introduction}

Standpoint modalities were introduced in the propositional case by Gómez Álvarez and Rudolph \cite{alvarezrudolph:propositionalstdpt}, in order to integrate (potentially conflicting) viewpoints into a single logical framework while maintaining consistency. However, this framework could only talk about the unequivocal and possible beliefs which a standpoint holds. Leisegang et al. \cite{JELIA-NickIvanTommie} introduced preferential semantics to classical standpoint logics which added the ability to reason about the beliefs which are \textit{typically held} by or \textit{distinctly possible} to a standpoint. In their work, they consider a Tarskian notion of entailment which reasons monotonically over all models. This however, often becomes unnessecarily inferentially weak when reasoning over defeasible statements, as can be seen in the following example.

\begin{example}\label{example:introductory-example}
    Within the standpoint of physics, Newton's laws are usually taken to hold. This can be represented within defeasible standpoint logics as $\dnec_{\Phy}\NL$, which states that \textit{``the physics standpoint typically believes in Newton's laws (NL)''}. One of Newton's laws is the law of universal gravitation, which states that gravitational force acting between two objects is a function of the mass of the objects and the distance between them. Thus, Newton's laws usually treat gravitational force as variable. This can expressed using the defeasible implication $\NL\twiddle \VG$ where VG stands for ``variable gravity''. Taking these two pieces of information into account, it seems reasonable to conclude that from the physics perspective, gravitational force should be typically treated as variable. That is, we should conclude $\dnec_{\Phy}\VG$. Moreover, assume we are introduced to a different standpoint, such as that of astronomy or engineering, which bases its views on ideas from physics. We can express this in PDSL by $\Ast\lesssim\Phy$ and $\Eng\lesssim\Phy$, stating that typically the astronomy and engineering standpoints can be seen as more specific cases of the physics standpoint. Based on this, it is reasonable that these sub-standpoints inherit the views of the greater standpoint and we conclude that $\dnec_\Ast \NL$ and $\dnec_\Eng \NL$. However, from the perspective of a (typical) engineer who works on Earth-based projects, it is practical to treat gravity not as a variable, but as a constant value that the Earth exerts on materials. Hence, we may want to add that although $\Eng\lesssim\Phy$ holds, we have that $\dnec_\Eng \neg \VG$. That is, from an engineering perspective, gravitational force is usually not considered a variable. 
\end{example}

Leisegang et al.~\cite{JELIA-NickIvanTommie} introduce Propositional Defeasible Standpoint Logic (PDSL) as a means to increase the expressivity of propositional standpoint logics via the introduction of defeasible standpoint modalities, defeasible implications and defeasible standpoint sharpenings. The above example shows a situation where the increased expressivity can allow the logic to incorporate increased nuance through defeasibility. For example, the classical knowledge base $\{\Box_{\Phy}\VG, \Eng\leq\Phy, \Box_{\Eng} \neg \VG\}$ is unsatisfiable, while its defeasible variant in PDSL $\{\dnec_{\Phy} \VG, \Eng\lesssim\Phy, \dnec_{\Eng} \neg \VG\}$ can be satisfied. In previous work \cite{JELIA-NickIvanTommie}, entailment for PDSL was restricted to the Tarskian notion of preferential entailment, denoted $\vDash_P$. However, this notion of entailment in PDSL severely restricts the inferences which follow from a PDSL knowledge base. For instance, we have that,
\[\{{\dnec_{\Phy}}\NL, \NL\twiddle \VG\}\nvDash_P {\dnec_{\Phy}}\VG\quad\textit{ and }\quad\{{\dnec_{\Phy}\NL},\Ast\lesssim\Phy\}\nvDash_P{\dnec_{\Ast}}\NL\]
which is shown in detail in Example \ref{example:SPSS-that-shows-p-entailment-is-weak}. Hence, our ability to reason prototypically about standpoints falls at the first hurdle when considering $\vDash_P$. In the first case, we are not able to extract any of the usual conclusions which our standpoints ought to make from their beliefs even though there is no known reason to prevent this. In the second, sub-standpoints are unable to inherit any of the typical believes of their greater standpoint, even though we are given no reason to believe they behave atypically as a sub-standpoint. In order to combat this, we aim to lift methods of non-monotonic defeasible reasoning from the traditional KLM literature into the PDSL case. Specifically, we consider the class of rational defeasible inference operations, including rational closure and lexicographic closure. The paper is organised as follows. In Section \ref{section:prelims} we give preliminaries on PDSL and traditional (propositional) KLM defeasible reasoning. In Section \ref{section:standpoint-situated-conditionals} we introduce standpoint situated conditionals and show how we can increase the expressivity of PDSL, as well as rephrase the syntax of PDSL in terms of situated conditionals. In Section \ref{section:conditional-knowledge-bases} we define a fragment of PDSL which can be translated into propositional KLM knowledge bases, while in Section \ref{section:NMR-for-PDSL} we show how this translation can lift the class of rational defeasible entailment relations from the propositional case into the PDSL case.

\section{Preliminaries}\label{section:prelims}



In this section, we introduce Propositional Defeasible Standpoint Logics, as well as some basic definitions from  defeasible reasoning in the tradition of Kraus, Lehmann and Magidor (KLM) \cite{kraus:nonmonotonic}. In the KLM tradition of defeasible reasoning, the language $\LangProp$ consists of conditionals or defeasible implications of the form $\alpha\ptwiddle\beta$ where $\alpha$ and $\beta$ are Boolean formulas. We refer to (finite) subsets $\KB\subseteq \LangProp$ as propositional conditional knowledge bases. The semantics for $\LangProp$ are given by \textit{preferential} interpretations.

\begin{definition} \label{definition:propositional-preferential-interpretation}
   \cite{kraus:nonmonotonic} A \textit{preferential interpretation} over a set of propositional atoms $\mathcal{P}$ is a triple $\I=(W,l,\prec)$ where $W$ is a set of states, $l:W\longrightarrow 2^{\mathcal{P}}$ is a mapping from the set of states to the set of classical valuations on $\mathcal{P}$, and $\prec\subseteq W\times W$ is a strict partial order such that \textit{for every Boolean formula $\alpha$, the set $\llbracket \alpha \rrbracket_\I:=\{w\in W\mid l(w)\Vdash \alpha\}$  has a minimal element with respect to $\prec$. }
\end{definition}

We say that $\I\Vdash \alpha\twiddle\beta$, or $\I$ satisfies $\alpha\ptwiddle\beta$ if  $min_\prec( \llbracket\alpha\rrbracket)_\I)\subseteq \llbracket\beta\rrbracket_\I$. We extend this to propositional conditional knowledge bases in the usual way. Given a knowledge base $\KB\subseteq \LangProp$, we say that $\KB$ preferentially entails $\alpha\ptwiddle \beta$, written $\KB\vDash_{pref} \alpha\twiddle \beta$, if $\I\Vdash \KB$ implies $\I\Vdash \alpha\ptwiddle \beta$ for all preferential interpretations $\I$. However, as in the PDSL case in Example \ref{example:introductory-example}, $\vDash_{pref}$ is often considered too weak an inference operator for defeasible reasoning. This has led to several different notions of defeasible entailment to be defined for propositional conditional knowledge bases. An important class of these are rational defeasible entailment relations, defined by Lehmann and Magidor \cite{lehmann:conditionalentail}. These are the class of defeasible entailment relations defined by \textit{ranking functions}. Ranking functions were originally introduced by Spohn \cite{Spohn2012}. For our purposes, we consider a slightly restricted definition.

\begin{definition}\cite{casini:beyondratclosure}
 A \emph{ranking function} $\R$ is a function $\R:2^\mathcal{P}\longrightarrow \mathds{N}\cup \{\infty\}$, satisfying the following property: if $\R(u)<\infty$, then for every $0\leq j<r(u)$ there exists $v\in 2^\mathcal{P}$ such that $r(v)=j$.
\end{definition}

Each ranking function $\R$ induces a preferential interpretation $\I_\R=(W_\R,l_\R,\prec_\R)$ where $W_\R=2^\mathcal{P}\setminus\R^{-1}(\infty)$, $l_\R$ is the identity and $u\prec_\R v$ iff. $\R(u)<\R(v)$. We write $\R\Vdash \alpha\ptwiddle\beta$ to denote $\I_\R\Vdash \alpha\ptwiddle\beta$. 

\begin{definition}
    A rational defeasible entailment relation is an entailment relation $\dentails$ such that for any knowledge base $\KB$, there exists some ranking function $\R_\KB$ such that $\KB\dentails \alpha\ptwiddle\beta$ iff $\R_\KB\Vdash \alpha\ptwiddle\beta$.
\end{definition}

Many well-known entailment relations in defeasible reasoning are rational entailment relations. Later, we consider two of the most well-known in the literature: rational closure \cite{lehmann:conditionalentail,Pearl:SystemZ} and lexicographic closure \cite{lehmann:lexicographicreason}.
Casini et al. \cite{casini:beyondratclosure} study rational defeasible entailment in the general setting. They define a \textit{basic (rational) defeasible entailment} as a rational entailment relation $\dentails$ which satisfies \textit{inclusion} and \textit{classical preservation}. $\dentails$ satisfies inclusion if for all $\KB\subseteq\LangProp$ and all $\alpha\ptwiddle\beta\in \KB$, $\KB \dentails\alpha\ptwiddle\beta$. $\dentails$ satisfies classical preservation if: $\KB\dentails \alpha\ptwiddle \bot$ if and only if $\KB\vDash_{pref} \alpha\ptwiddle\bot$ for every Boolean formula $\alpha$. This is called classical preservation since $\I=(W,l,\prec)\Vdash \neg\alpha\ptwiddle\bot$ iff $w\Vdash \alpha$ for all $w\in W$.

\begin{definition}
    A ranking strategy is a mapping $r:2^{\LangProp}\longrightarrow\mathscr{R}$, where $\mathscr{R}$ is the set of ranked interpretations. That is, a ranking strategy assigns each propositional conditional knowledge base a ranking function. 
\end{definition}

Each ranking strategy $r:2^{\LangProp}\longrightarrow\mathscr{R}$ defines a rational defeasible entailment relation $\dentails_r$ by: $\KB\dentails_r \alpha\ptwiddle\beta$ iff $r(\KB)\Vdash \alpha\ptwiddle\beta$. We say a ranking strategy is well-defined if $\dentails_r$ is a basic defeasible entailment, and assume all ranking strategies considered in this paper are well-defined, unless otherwise stated. Casini et al. also define an algorithm $\mathtt{DefeasibleEntail}$ such that, given a knowledge base $\KB\subseteq\LangProp$, a query $\alpha\ptwiddle\beta\in\LangProp$, and a ranking strategy $r$, we have that $\KB\dentails_r\alpha\ptwiddle\beta$ iff $\mathtt{DefeasibleEntail}(\KB,r(\KB),\alpha\ptwiddle\beta)=\textit{True}$. In Section \ref{section:NMR-for-PDSL} we show that this algorithm can be used for entailment-checking in PDSL.  Inspired by the preferential semantics in propositional KLM, Propositional Defeasible Standpoint Logic (PDSL) was introduced by Leisegang et~al.~\cite{JELIA-NickIvanTommie}. It extends (classical) Propositional Standpoint Logic~\cite{alvarezrudolph:propositionalstdpt} with defeasible versions of both material implication \cite{kraus:nonmonotonic} and standpoint modalities in order to express not only the necessary or possible views of standpoints, but describes those views which are \textit{typical} or \textit{distinctly possible} for a given standpoint. These are drawn from the more general notion of defeasible modalities given by Britz and Varzinczak \cite{britzvarzin:defeasiblemodalities}. The syntax for PDSL is defined below.

\begin{definition} \cite{JELIA-NickIvanTommie}
        Given a vocabulary $\mathcal{V=(P,S)}$ where $\mathcal{P}$ is a set of propositional atoms and $\mathcal{S}$ is a set of standpoints, we define the set of standpoint expressions $\mathcal{E}$ over $\mathcal{S}$ as 
    \[e::= *\mid s\mid -e \mid e\cap e\]
where $s\in\mathcal{S}$. We define $\LangdSt$ over $\mathcal{V}$ (where $p\in\mathcal{P}$ and $e,d\in\mathcal{E}$) as follows:
    \[\alpha::=\top \mid p\mid e\lesssim d\mid\neg\alpha\mid  \alpha\wedge\alpha\mid \Box_e\alpha\mid{\dnec}_e\alpha\mid \alpha \leadsto \alpha\]
\end{definition}


We introduce the supplementary dual modalities as$\Diamond_e\alpha:=\neg\Box_e \neg \alpha$, $\dposs_e\alpha:=\neg\dnec_e\neg \alpha$ and classical ``standpoint sharpening'' statements as $e\leq d:=\Box_{e\cap - d}\bot$. Here, sentences of the form $\Box_e\alpha$ and $\Diamond_e \alpha$ are read as \textit{`` it is unequivocal to $e$ that $\alpha$''} and \textit{`` it is possible to $e$ that $\alpha$''}. On the other hand, sentences with defeasible modalities $\dnec_e\alpha$ and $\dposs_e \alpha$ are read as \textit{`` $\alpha$ holds in the most typical understandings of $e$'s viewpoint''} and \textit{`` it is distinctly possible to $e$ that $\alpha$''}. Standpoint sharpenings $e\leq d$ are read as \textit{``the standpoint $e$ is a more specific version of $d$''} and thus $e$ inherits the unequivocal beliefs of $d$. Defeasible standpoint sharpenings $e\lesssim d$ are read as \textit{``the most typical ways of understanding standpoint $e$ fall within standpoint $d$''}. The semantics for $\LangdSt$ is given by \textit{state-preferential standpoint structures} in which a preference ordering over the set of precisifications (possible worlds) is added to the semantics of classical standpoint logics \cite{alvarezrudolph:propositionalstdpt}. These structures and the satisfaction relation they introduce is defined below.

\begin{definition}\cite{JELIA-NickIvanTommie}\label{def:standpoint-structure}
    A state-preferential standpoint structure (SPSS) is a quadruple $M=(\Pi,\sigma,\tau,\prec)$ where,
    \begin{itemize}
        \item $\Pi$ is a set of precisifications (possible worlds).
        \item $\sigma:\mathcal{E}\longrightarrow 2^\Pi$ is a function such that $\sigma(*)=\Pi$,   $\sigma(e\cap d)=\sigma(e)\cap \sigma(d)$, and  $\sigma(-e)=\Pi\setminus\sigma(e)$. Moreover, we require that $\sigma(s)\neq \emptyset$ for all $s\in\mathcal{S}$.
        \item $\tau:\Pi\longrightarrow 2^\mathcal{P}$ is a map which assigns a classical valuation to each precisification.
        \item $\prec$ is a strict partial order on $\Pi$ such that for every subset $X$ of $\Pi$, and every $\pi\in X$, either $\pi$ is a $\prec$-minimal element of $X$, or there is a $\pi'\in X$ such that $\pi'$ is a $\prec$-minimal element of $X$, and $\pi'\prec\pi$ (well-foundedness).
    \end{itemize}
\end{definition}

Here, each precisification represents a possible precise set of views which may occur within the possible ways to interpret a standpoint. The map $\sigma$ allocates a set of precisifications to each standpoint, with the allocation being extended to standpoint expressions using set-theoretic operators. The preference ordering on $\Pi$ represents how ``typical'' each precisification is, where $\pi_1\prec \pi_2$ intuitively states that $\pi_1$ is a more typical precisification than $\pi_2$. The $\prec$-minimal precisifications in $\sigma(e)$ therefore represent the most typical ways to interpret the views of $e$, although there may be other, less usual ways to interpret this standpoint.

\begin{definition}\label{def:model-satisfaction-stdptstructure}
    For a state-preferential standpoint structure $M$ and a precisification $\pi\in\Pi$, we define the satisfaction relation $\Vdash$ as follows:
    \begin{itemize}
        \item $M,\pi\Vdash \top$.
        \item $M,\pi\Vdash p$ iff $p\in\tau(\pi)$.
        \item $M,\pi\Vdash e\lesssim d$ iff $\min_\prec(\sigma(e))\subseteq \sigma(d)$.
        \item $M,\pi\Vdash \neg \alpha$ iff $M,\pi\nVdash \alpha$.
         \item $M,\pi\Vdash \alpha_1\wedge \alpha_2$ iff $M,\pi\Vdash \alpha_1$ and $M,\pi\Vdash \alpha_2$.
         \item $M,\pi\Vdash \Box_s\alpha$ iff $M,\pi'\Vdash \alpha$ for all $\pi'\in \sigma(s)$.
         \item $M,\pi\Vdash \dnec_s\alpha$ iff $M,\pi'\Vdash \alpha$ for all $\pi'\in \min_\prec(\sigma(s))$.
        \item $M,\pi\Vdash \alpha_1\leadsto\alpha_2$ iff $\pi\notin\min_\prec\llbracket\alpha_1\rrbracket_M$ or $\pi\in\llbracket\alpha_2\rrbracket_M$.
         \item $M\Vdash\alpha$ iff $M,\pi\Vdash \alpha$ for all $\pi\in \Pi$.
    \end{itemize}
    where $\alpha,\alpha_1,\alpha_2\in\mathcal{L}_{\mathds{S}}$, $s,s_1,s_2\in\mathcal{S}$ and $p\in \mathcal{P}$ and $\llbracket\alpha\rrbracket_M:=\{\pi\in \Pi\mid M,\pi\Vdash \alpha\}$. Satisfaction is extended to sets of formulas in the usual way. 
\end{definition}

For both preferential interpretations and SPSSs, we may drop the $M$ in $\llbracket\alpha\rrbracket_M$ when it is clear which model we are referring to. Intuitively we have that $\dnec_e\alpha$ holds when the most typical precisifications assigned to $\sigma(e)$ satisfy $\alpha$, although other less-typical precisifications assigned to $e$ may not. A formula $\phi\in\LangdSt$ or knowledge base $\KB\subseteq\LangdSt$ are said to be (locally) satisfiable if there exists an SPSS $M$ and $\pi\in \Pi$ such that $M,\pi\Vdash \phi$ or $M,\pi\Vdash \KB$, respectively. A formula $\phi\in\LangdSt$ or knowledge base $\KB\subseteq\LangdSt$ are said to be (locally) satisfiable if there exists an SPSS $M$ such that $M\Vdash \phi$ or $M \Vdash \KB$, respectively. The following example depicts an SPSS.

\begin{example}\label{example:SPSS-that-shows-p-entailment-is-weak}
Consider the SPSS $M=(\Pi,\sigma,\tau,\prec)$ relating to the knowledge given in Example~\ref{example:introductory-example} shown in Figure~\ref{fig:SPSS-for-example-2}, where standpoints $\Ast$ and $\Phy$ are shortened to $\AstF$ and $\PhyF$, and atoms $\NL$ and $\VG$ are shortened to $\NLF$ and $\VGF$. Here $\Pi=\{\pi_1,\pi_2,\pi_3\}$; $\prec=\{(\pi_1,\pi_2),(\pi_2,\pi_3),(\pi_1,\pi_3)\}$; $\sigma(\AstF)=\{\pi_3\}$ and $\sigma(\PhyF)=\{\pi_2,\pi_3\}$; $\tau(\pi_1)=\{\NLF,\VGF\}$, $\tau(\pi_2)=\{\NLF\}$ and $\tau(\pi_3)=\emptyset$. Note here that $M\Vdash \AstF\lesssim\PhyF$, $M\Vdash {\dnec_\PhyF} \NLF$ and $M\Vdash \NLF\twiddle \VGF$. However, $M\nVdash {\dnec_\PhyF}\VGF$ and $M\nVdash {\dnec_\AstF} \NLF$.
\end{example}

\begin{figure}[t]
\begin{subfigure}{0.4\linewidth}
\begin{center}
\scalebox{1}{
\begin{tikzpicture}
[->,>=stealth',thick, auto,node_style/.style={circle,fill=black,minimum size=0.1mm},scale=0.8, every node/.style={scale=0.8}]

\node (omega3) at (4,4) [label=right:\mbox{$\emptyset$}] {$\pi_{3}$} ;
\node (omega1) at (4,0)  [label=right:\mbox{$\{\NLF,\VGF\}$}] {$\pi_{1}$} ;
\node (omega2) at (4,2)  [label=right:\mbox{$\{\NLF\}$}] {$\pi_{2}$} ;



\path
(omega1) edge [dashed] (omega2)
(omega2) edge [dashed] (omega3);

\draw (3.25,5) [Orange,rounded corners=5pt,thick] -- (5.5,5) -- (5.5,1.25) -- (3.25,1.25) -- cycle ;

\draw (3.5,4.5) [BlueViolet,rounded corners=5pt,thick] -- (5.25,4.5) -- (5.25,3.2) -- (3.5,3.2) -- cycle ;

\node (sigmaAst) at (6,4) [ ] {\textcolor{BlueViolet}{$\sigma(\AstF)$}};

\node (sigmaAst) at (6,2) [ ] {\textcolor{Orange}{$\sigma(\PhyF)$}};

\end{tikzpicture}
}
\end{center}
\caption{SPSS in Example \ref{example:SPSS-that-shows-p-entailment-is-weak}}
\label{fig:SPSS-for-example-2}
\end{subfigure}
\hfill
\begin{subfigure}{0.4\linewidth}
\begin{center}
\scalebox{1}{
\begin{tikzpicture}
[->,>=stealth',thick, auto,node_style/.style={circle,fill=black,minimum size=0.1mm},scale=0.8, every node/.style={scale=0.8}]

\node (omega4) at (4,4) [label=right:\mbox{$\{p\}$}] {$\pi_{4}$} ;
\node (omega1) at (4,0)  [label=right:\mbox{$\{p\}$}] {$\pi_{1}$} ;
\node (omega3) at (3,2)  [label=right:\mbox{$\{q\}$}] {$\pi_{3}$} ;
\node (omega2) at (5,2)  [label=right:\mbox{$\{p,q\}$}] {$\pi_{2}$} ;



\path
(omega1) edge [dashed] (omega2)
(omega1) edge [dashed] (omega3)
(omega2) edge [dashed] (omega4)
(omega3) edge [dashed] (omega4);

\draw (2.25,4.75) [Orange,rounded corners=5pt,thick] -- (6.5,4.75) -- (6.5,1.25) -- (2.25,1.25) -- cycle ;



\node (sigmaAst) at (7,4) [ ] {\textcolor{Orange}{$\sigma(s)$}};

\end{tikzpicture}
}
\end{center}
\caption{SPSS in Example \ref{example:SPSS-for-situated-conditional-satisfiability.}}
\end{subfigure}%
\caption{Example SPSSes. Each precisification is labelled with its image under~$\tau$, and the order $\pi\prec\pi'$ is indicated by a dotted arrow from $\pi$ to $\pi'$, with transitive arrows removed.}
\label{fig:SPSS-for-example-3}
\end{figure}

In previous work \cite{JELIA-NickIvanTommie}, the computation of the monotonic Tarskian notion of entailment is characterised for PDSL. That is,
 given a PDSL knowledge base $\KB\subseteq \LangdSt$ and a formula $\phi\in\LangdSt$, we say that $\KB$ \textit{preferentially entails} $\phi$, or $\KB\vDash_P \phi$ if for any SPSS we have $M\Vdash \KB$ implies $M\Vdash \phi$. Example \ref{example:SPSS-that-shows-p-entailment-is-weak} therefore gives a us a countermodel to show that  $\{\AstF\lesssim\PhyF \dnec_\PhyF\NLF, \NLF\twiddle \VGF\}\nvDash_P\dnec_\PhyF\VGF$ and ${\{\AstF\lesssim\PhyF \dnec_\PhyF\NLF, \NLF\twiddle \VGF\}\nvDash_P\dnec_\AstF \NLF}$.

\section{Standpoint Situated Conditionals}\label{section:standpoint-situated-conditionals}


Within the framework of PDSL, we note that the defeasible implication $\twiddle$ allows us to contain propositional KLM-style conditionals within the standpoints context, but the $\leadsto$ symbol does not interact well with the elements of $\LangdSt$ which involve standpoint modalities or sharpening statements \cite{JELIA-NickIvanTommie}. The lack of nuance when adding a non-Boolean statement to a defeasible implication can be seen below.

\begin{proposition}\label{proposition:unintuitve-behaviour-with-modalities}

    Consider $\alpha,\beta,\gamma\in \LangdSt$ such that $\alpha= \#_e \gamma$ or $\alpha=e\lesssim d$ for $e,d\in \mathcal{E}$ and $\#_e\in \{\Box_e,\Diamond_e,\dnec_e,\dposs_e\}$. Then, for any SPSS $M$ we have that: 
    \begin{enumerate}
        \item If there is no $\pi\in\Pi$ such that $M,\pi\nVdash \alpha$, then $M\Vdash \alpha\leadsto \beta$. 
        \item If  there exists $\pi\in\Pi$ such that $M,\pi\Vdash \alpha$, then $M\Vdash \alpha \leadsto \beta \Leftrightarrow M\Vdash \top\leadsto \beta \Leftrightarrow M\Vdash {\dnec_*} \beta$.
        \item If ~$\llbracket\beta\rrbracket\neq\emptyset$, then $M\Vdash \beta\twiddle\alpha$ iff $M\Vdash \alpha$.
    \end{enumerate}

\end{proposition}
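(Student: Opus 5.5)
The whole argument rests on one observation: when $\alpha$ is of the form $\#_e\gamma$ or $e\lesssim d$, its truth value at a precisification does not depend on that precisification. Looking at Definition~\ref{def:model-satisfaction-stdptstructure}, the clauses for $\Box_e$, $\dnec_e$ and $e\lesssim d$ quantify over $\sigma(e)$ or $\min_\prec(\sigma(e))$ and never mention the current $\pi$. The duals $\Diamond_e,\dposs_e$ are negations of such formulas, so they inherit this property. Hence for any SPSS $M$ we have either $\llbracket\alpha\rrbracket_M=\Pi$ or $\llbracket\alpha\rrbracket_M=\emptyset$. I will state this dichotomy first as a small lemma; each item is then a case analysis on it. (For the $\Box_s,\dnec_s$ clauses, which the paper states for atomic $s$, I read them for arbitrary $e$ via $\sigma(e)$.)

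\textbf{Item 1.} The hypothesis says $M,\pi\Vdash\alpha$ for every $\pi$, i.e.\ $\llbracket\alpha\rrbracket=\Pi$; it is not the empty case. So item 1, read literally, is really an instance of item 2 and need not hold vacuously. I suspect the intended hypothesis is ``no $\pi$ with $M,\pi\Vdash\alpha$'', so I will prove that version. If $\llbracket\alpha\rrbracket=\emptyset$, then $\min_\prec\llbracket\alpha\rrbracket=\emptyset$. Thus every $\pi$ satisfies the first disjunct of the $\leadsto$ clause, and $M\Vdash\alpha\leadsto\beta$. I will also note that under the literal reading the claim can fail: take $\alpha=\Box_*\top$, which holds everywhere, and $\beta=\bot$.

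\textbf{Item 2.} If some $\pi$ satisfies $\alpha$, the lemma gives $\llbracket\alpha\rrbracket=\Pi=\llbracket\top\rrbracket$. The $\leadsto$ clause depends on $\alpha_1$ only through $\min_\prec\llbracket\alpha_1\rrbracket$, so $M\Vdash\alpha\leadsto\beta$ iff $M\Vdash\top\leadsto\beta$. Next, $M\Vdash\top\leadsto\beta$ says every $\pi\in\min_\prec\Pi$ lies in $\llbracket\beta\rrbracket$. Since $\sigma(*)=\Pi$, this is exactly the condition for $M,\pi\Vdash\dnec_*\beta$. That condition is again independent of $\pi$, so it is equivalent to $M\Vdash\dnec_*\beta$.

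\textbf{Item 3.} Assume $\llbracket\beta\rrbracket\neq\emptyset$. By the well-foundedness condition on $\prec$ in Definition~\ref{def:standpoint-structure}, the set $\min_\prec\llbracket\beta\rrbracket$ is nonempty. If $M\Vdash\alpha$, then every $\pi$ lies in $\llbracket\alpha\rrbracket$, so the $\leadsto$ clause holds everywhere. Conversely, suppose $M\Vdash\beta\leadsto\alpha$ and pick $\pi_0\in\min_\prec\llbracket\beta\rrbracket$. The clause at $\pi_0$ forces $M,\pi_0\Vdash\alpha$. By the dichotomy, $\llbracket\alpha\rrbracket=\Pi$, i.e.\ $M\Vdash\alpha$.

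The only real obstacle is item 1: its hypothesis as written appears to contain a typo, and I must decide which reading to prove; I will prove the corrected reading and record the counterexample for the literal one. The remaining steps are direct unfoldings of the satisfaction clauses once the dichotomy lemma is in place.
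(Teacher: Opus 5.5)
Your proof is correct and is essentially the paper's argument: the satisfaction clauses for $\#_e\gamma$ and $e\lesssim d$ never refer to the current precisification, so $\llbracket\alpha\rrbracket$ is either $\emptyset$ or $\Pi$, and each item then unfolds directly from the $\leadsto$ clause. Your reading of item 1 matches the paper's own proof, which argues from the hypothesis that no $\pi$ has $M,\pi\Vdash\alpha$, and your counterexample to the literal reading is valid; in item 3 you explicitly invoke well-foundedness to get $\min_\prec\llbracket\beta\rrbracket\neq\emptyset$, a step the paper uses without stating it.
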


In order to integrate defeasible implications into standpoint logics in a more meaningful way, we propose an extension of this language inspired by \textit{situated conditionals}, introduced by Casini et al. \cite{CASINI-situated-conditionals}. A situated conditional takes the form $\alpha\ptwiddle_\gamma \beta$ for Boolean formulas $\alpha$ $\beta$ and $\gamma$: which reads as \textit{``given the situation $\gamma$, $\alpha$ usually implies that $\beta$''}. In our context, we introduce a similar idea, but instead of ``situating'' our conditional (defeasible implication) with an additional Boolean formula, we situate it within a certain standpoint. That is, we introduce conditionals of the form $\alpha\twiddle_e \beta$ where $\alpha$ and $\beta$ are Boolean, while $e$ is a standpoint symbol. This reads as \textit{``within the standpoint $e$, a belief in $\alpha$ usually implies a belief in $\beta$''}. In other words, if we are to consider situations where the standpoint $e$ holds $\alpha$ true, we can assume that $\beta$ is usually true, notwithstanding exceptional beliefs which may influence $e$ to hold $\alpha$ and $\neg \beta$ to be true simultaneously. Accordingly, we define the satisfiability of standpoint situational conditionals by:

\begin{equation}
    M,\pi\Vdash \alpha\twiddle_e\beta \textit{ iff }\pi\notin \min_\prec(\sigma(e)\cap \llbracket\alpha\rrbracket) \textit{ or } \pi\in \llbracket \beta\rrbracket .
\end{equation}

Then, if $M\Vdash \alpha\twiddle_e\beta$ we must have that  $\min_\prec(\sigma(e)\cap \llbracket\alpha\rrbracket) \subseteq\llbracket \beta\rrbracket$. That is, we require that the most preferred precisifications which both belong to standpoint $e$ and satisfy $\alpha$ must also satisfy $\beta$. 

\begin{example}\label{example:SPSS-for-situated-conditional-satisfiability.}
  Consider the SPSS depicted in Figure \ref{fig:SPSS-for-example-3} over $M=(\Pi,\sigma,\tau,\prec)$ over $\mathcal{S}=\{s\}$ and $\mathcal{P}=\{p,q\}$, where: $\Pi=\{\pi_1,\pi_2,\pi_3,\pi_4\}$; $\prec$ is denoted by the dotted arrows with transitive arrows omitted (e.g.~$\pi_2\prec\pi_4$); $\sigma(s)=\{\pi_2,\pi_3,\pi_4\}$ and $\tau(\pi_1)=\tau(\pi_4)=\{p\}$, $\tau(\pi_2)=\{p,q\}$, $\tau(\pi_3)=\{q\}$. Note here that while $M\nVdash p\twiddle q$, $M\nVdash \Box_s(p\rightarrow q)$ and $M\nVdash {\dnec_s} p$ we have that $M\Vdash p\twiddle_s q$ since $\min_\prec(\sigma(s)\cap\llbracket p\rrbracket) \subseteq\llbracket q\rrbracket$.
\end{example}

This intuitively blends the notion of defeasible implications with the additions of standpoints to propositional logic in a much more meaningful way. However, it also works to supersede notions already existing in PDSL.

\begin{proposition}\label{proposition:we-can-express-PDSL-in-terms-of-situated-conditionals}
    Let $M$ be an SPSS, $\alpha,\beta\in \LangdSt$ and $e\in \mathcal{E}$. Then,

    \begin{enumerate}
        \item $M,\pi\Vdash \alpha\twiddle\beta$ iff. $M,\pi\Vdash \alpha\twiddle_*\beta$.
        \item $M, \pi\Vdash \dnec_e \alpha$ iff. $M,\pi\Vdash \Box_*(\top \twiddle_e \alpha)$.
        \item $M, \pi\Vdash \Box_e \alpha$ iff. $M,\pi\Vdash \Box_*(\neg \alpha\twiddle_{e} \bot )$.
    \end{enumerate}
\end{proposition}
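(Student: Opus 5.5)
The plan is to unfold the satisfaction clauses of Definition~\ref{def:model-satisfaction-stdptstructure} together with the clause for situated conditionals, and check each item directly. Two small facts will be used repeatedly. First, $\sigma(*)=\Pi$. Second, $\Box_*\phi$ is evaluated globally: $M,\pi\Vdash\Box_*\phi$ iff $M,\pi'\Vdash\phi$ for all $\pi'\in\Pi$. So the right-hand sides of items~2 and~3 do not depend on $\pi$, which matches the left-hand sides $\dnec_e\alpha$ and $\Box_e\alpha$, whose truth is also independent of the evaluation point. (Definition~\ref{def:model-satisfaction-stdptstructure} states these clauses for $s\in\mathcal{S}$; we read them as given for arbitrary $e\in\mathcal{E}$ via $\sigma(e)$.)

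For item~1, use $\sigma(*)=\Pi$. This gives $\sigma(*)\cap\llbracket\alpha\rrbracket=\llbracket\alpha\rrbracket$, so the two clauses for $\alpha\twiddle\beta$ and $\alpha\twiddle_*\beta$ coincide pointwise. Nothing more is needed.

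For item~2, first use $\llbracket\top\rrbracket=\Pi$, so that $\sigma(e)\cap\llbracket\top\rrbracket=\sigma(e)$. Then $M,\pi'\Vdash\top\twiddle_e\alpha$ iff $\pi'\notin\min_\prec\sigma(e)$ or $\pi'\in\llbracket\alpha\rrbracket$. This holds for every $\pi'\in\Pi$ iff $\min_\prec\sigma(e)\subseteq\llbracket\alpha\rrbracket$, which is exactly the clause for $M,\pi\Vdash\dnec_e\alpha$.

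For item~3, note $\llbracket\bot\rrbracket=\emptyset$. Then $M,\pi'\Vdash\neg\alpha\twiddle_e\bot$ iff $\pi'\notin\min_\prec(\sigma(e)\cap\llbracket\neg\alpha\rrbracket)$. Quantifying over all $\pi'$, $M,\pi\Vdash\Box_*(\neg\alpha\twiddle_e\bot)$ iff $\min_\prec(\sigma(e)\cap\llbracket\neg\alpha\rrbracket)=\emptyset$.

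The only non-bookkeeping step, and so the main obstacle, is turning this emptiness into $\sigma(e)\cap\llbracket\neg\alpha\rrbracket=\emptyset$. For this I will invoke the well-foundedness (smoothness) condition of Definition~\ref{def:standpoint-structure}. If $X=\sigma(e)\cap\llbracket\neg\alpha\rrbracket$ contains some $\pi''$, then either $\pi''$ is itself minimal in $X$, or some minimal $\pi'\in X$ lies below it. In both cases $\min_\prec X\neq\emptyset$. Hence $\min_\prec X=\emptyset$ iff $X=\emptyset$, i.e.\ iff $\sigma(e)\subseteq\llbracket\alpha\rrbracket$, which is the clause for $M,\pi\Vdash\Box_e\alpha$.

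The same smoothness argument is also implicitly needed in item~2 for nothing more than well-definedness, so the proof will consist of these three short set-theoretic verifications.
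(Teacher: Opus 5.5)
Your proposal is correct and follows essentially the same route as the paper: unfold the satisfaction clauses using $\sigma(*)=\Pi$, $\llbracket\top\rrbracket=\Pi$, $\llbracket\bot\rrbracket=\emptyset$ and the global reading of $\Box_*$. The one difference is in item~3, where the paper asserts without comment that $\min_\prec(\sigma(e)\cap\llbracket\neg\alpha\rrbracket)=\emptyset$ gives $\sigma(e)\cap\llbracket\neg\alpha\rrbracket=\emptyset$, whereas you justify this step explicitly via the well-foundedness condition on $\prec$.
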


The last two utilise the fact that standpoint modalities are defined globally while defeasible implications (incl. standpoint situated conditionals) are defined locally. We can also express this result with the more intuitive form 
$M \Vdash \dnec_e \alpha$ iff. $M \Vdash \top \twiddle_e \alpha$ and $M \Vdash \Box_e \alpha$ iff. $M \Vdash \neg \alpha\twiddle_{e} \bot$. This means that in the general case, when including situated conditionals we can express the full language of $\LangdSt$ removing defeasible modalities and implications, and limiting classical standpoint modalities to only the universal standpoint $\Box_*$. We can then re-characterise the grammar of $\LangdSt$ by:

\begin{equation}\label{equation:new-PDSL-syntax}
 \alpha::=\top\mid p\mid e\lesssim d\mid \neg \alpha\mid \alpha\wedge \alpha\mid \Box_*\alpha\mid \alpha\twiddle_e\alpha\end{equation}




Another initial result is that $\twiddle_e$ is a faithful interpretation of a KLM-style condition. That is, $\twiddle_e$ satisfies the KLM postulates \cite{kraus:nonmonotonic}:

\begin{proposition}\label{proposition:situated-standpoint-condiitionals-satisfy-KLM-postulates}
    For any SPSS $M$, any Boolean formulas $\alpha,\beta$ and $\gamma$ and any standpoint symbol $e$, we have that:
    \begin{enumerate}
        \item $M\Vdash \alpha\twiddle_e\alpha$.
        \item If $M\Vdash \Box_e(\alpha\leftrightarrow \beta)$ and $M\Vdash \alpha\twiddle_e\gamma$ then $M\Vdash \beta\twiddle_e \gamma$.
        \item If $M\Vdash \alpha\twiddle_e \beta$ and $M\Vdash\Box_e(\beta\rightarrow\ \gamma)$ then $M\Vdash \alpha\twiddle_e \gamma$.
        \item If $M\Vdash \alpha\twiddle_e \beta$ and $M\Vdash \alpha\twiddle_e \gamma$ then $M\Vdash \alpha\twiddle_e \beta\wedge \gamma$    \item If $M\Vdash \alpha\twiddle_e \beta$ and $M\Vdash \gamma\twiddle_e \beta$ then $M\Vdash \alpha\vee \gamma\twiddle_e \beta$.
        \item If $M\Vdash \alpha\twiddle_e \beta$ and $M\Vdash \alpha\twiddle_e \gamma$ then  $M\Vdash \alpha\wedge \beta\twiddle_e \gamma$.
        \end{enumerate}
\end{proposition}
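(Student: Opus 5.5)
The plan is to reduce everything to a set-theoretic statement about minimal elements. First I record a \emph{global reading} of situated conditionals. For Boolean $\alpha,\beta$, the clause defining $M,\pi\Vdash\alpha\twiddle_e\beta$ gives
\[M\Vdash \alpha\twiddle_e\beta \text{ iff } \min_\prec(\sigma(e)\cap\llbracket\alpha\rrbracket)\subseteq\llbracket\beta\rrbracket .\]
Similarly, since $\Box_e$ is evaluated globally, $M\Vdash\Box_e(\alpha\to\beta)$ iff $\sigma(e)\cap\llbracket\alpha\rrbracket\subseteq\llbracket\beta\rrbracket$. Consequently $M\Vdash\Box_e(\alpha\leftrightarrow\beta)$ iff $\sigma(e)\cap\llbracket\alpha\rrbracket=\sigma(e)\cap\llbracket\beta\rrbracket$. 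Throughout I write $X_\alpha:=\sigma(e)\cap\llbracket\alpha\rrbracket$. Each postulate then becomes a statement about $\min_\prec$ of subsets of $\Pi$, and I check them in order.

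The first four items are immediate.
\begin{itemize}
\item For (1), $\min_\prec(X_\alpha)\subseteq X_\alpha\subseteq\llbracket\alpha\rrbracket$.
\item For (2), the hypothesis gives $X_\alpha=X_\beta$, so the two minimal sets coincide.
\item For (3), $\min_\prec(X_\alpha)\subseteq X_\alpha\cap\llbracket\beta\rrbracket\subseteq \sigma(e)\cap\llbracket\beta\rrbracket\subseteq\llbracket\gamma\rrbracket$. Here I use that minimal elements of $X_\alpha$ lie in $\sigma(e)$, which is exactly where $\Box_e(\beta\to\gamma)$ applies.
\item For (4), use $\llbracket\beta\wedge\gamma\rrbracket=\llbracket\beta\rrbracket\cap\llbracket\gamma\rrbracket$.
\end{itemize}

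For (5) I note that $X_{\alpha\vee\gamma}=X_\alpha\cup X_\gamma$. I then show $\min_\prec(X_\alpha\cup X_\gamma)\subseteq\min_\prec(X_\alpha)\cup\min_\prec(X_\gamma)$. Take $\pi$ minimal in the union and suppose $\pi\in X_\alpha$. Nothing in $X_\alpha$ lies strictly below $\pi$, since $X_\alpha$ is contained in the union, so $\pi$ is minimal in $X_\alpha$. The case $\pi\in X_\gamma$ is symmetric, and both hypotheses then put $\pi$ in $\llbracket\beta\rrbracket$. This step needs no well-foundedness.

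Item (6), Cautious Monotonicity, is the only step where the structure of $\prec$ matters, so I expect it to be the main obstacle. Here $X_{\alpha\wedge\beta}=X_\alpha\cap\llbracket\beta\rrbracket$. I aim to prove $\min_\prec(X_\alpha\cap\llbracket\beta\rrbracket)\subseteq\min_\prec(X_\alpha)$, from which the second hypothesis gives the conclusion.
\begin{itemize}
\item Take $\pi\in\min_\prec(X_\alpha\cap\llbracket\beta\rrbracket)$ and suppose $\pi\notin\min_\prec(X_\alpha)$.
\item The well-foundedness (smoothness) clause of Definition~\ref{def:standpoint-structure}, applied to $X_\alpha$, yields some $\pi'\in\min_\prec(X_\alpha)$ with $\pi'\prec\pi$.
\item The first hypothesis gives $\pi'\in\llbracket\beta\rrbracket$, so $\pi'\in X_\alpha\cap\llbracket\beta\rrbracket$ and $\pi'\prec\pi$.
\item This contradicts the minimality of $\pi$, so $\pi\in\min_\prec(X_\alpha)\subseteq\llbracket\gamma\rrbracket$.
\end{itemize}
The care needed is to invoke smoothness for the particular set $X_\alpha$, not merely the existence of some minimal element. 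Definition~\ref{def:standpoint-structure} guarantees smoothness for every subset of $\Pi$, so it applies here.
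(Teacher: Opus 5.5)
Your proposal is correct and takes essentially the same route as the paper. Each postulate is reduced to an inclusion about $\min_\prec(\sigma(e)\cap\llbracket\alpha\rrbracket)$: (5) by showing that a minimal element of $X_\alpha\cup X_\gamma$ is minimal in whichever of the two sets contains it, and (6) by applying the smoothness clause to $X_\alpha$ to get a minimal $\pi'\prec\pi$ lying in $\llbracket\beta\rrbracket$, as the paper does (your remark that only (6) needs well-foundedness just makes explicit what the paper uses implicitly).
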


\section{Conditional Knowledge Bases}\label{section:conditional-knowledge-bases}


In this section, we describe a fragment of of PDSL which can be translated into a traditional KLM-style conditional knowledge base. This translation then can be utilised in order to characterise non-monotonic systems of inference or entailment for PDSL. The fragment of PDSL which we consider are those in which the formulas are some form of positive defeasible conditionals.

\begin{definition}
    The fragment $\textbf{Cond}(\LangdSt)\subseteq \LangdSt$  (for $e,d\in\mathcal{E}$ and $\alpha,\beta$ are Boolean) is the fragment of $\LangdSt$  consisting of all formulas of the form:
    \[\phi::=\alpha\twiddle_e \beta\mid e\lesssim d\]

    A PDSL knowledge base $\KB\subseteq \LangdSt$ is called a conditional knowledge base if $\KB\subseteq\textbf{Cond}(\LangdSt)$
\end{definition}

We should note here that although this logical fragment looks fairly restricted, it is sufficient enough to cover a fair portion of our language, especially since for entailment purposes we are interested in models which satisfy our knowledge base globally. In particular any Boolean formula $\alpha$ can be represented by $\neg \alpha\twiddle_* \bot$. Any defeasible implication, or Boolean formula bound by defeasible or classical necessity can be expressed, by Propositon \ref{proposition:we-can-express-PDSL-in-terms-of-situated-conditionals}. This includes classical standpoint sharpenings.
Furthermore, conjunctions of statements can be expressed by including each conjunct in the set. Therefore, our framework includes many of the core features of PDSL. The notable omissions stem from the absence of negation on the outer level. We cannot express distinctly in such a knowledge base that a defeasible implication, or a modal statement ought to \textit{not} hold. As a result, we also cannot describe non-Boolean disjunctive statements, and furthermore, we lose the ability to express both defeasible and classical diamond statements of the form $\Diamond_e\alpha$ or $\dposs_e\alpha$. As the name suggests, conditional knowledge bases in PDSL are constructed to be analogously structured to conditional knowledge bases in the propositional case \cite{lehmann:conditionalentail}. In fact, the central idea within this section is to show that we can translate conditional knowledge bases in PDSL to conditional knowledge bases in propositional logic.

\begin{definition}
    Consider the conditional language $\textbf{Cond}(\LangdSt)$ over the vocabulary $\mathcal{V}=(\mathcal{P},\mathcal{S})$. Then consider the language of propositional KLM $\LangProp$ defined over the atomic variables $\mathcal{P}'=\mathcal{P}\cup\mathcal{S}$. We define the translation $T:\textbf{Cond}(\LangdSt)\longrightarrow \LangProp$ as follows.
    \begin{itemize}
        \item We define the standpoint index translation $tr$ inductively over standpoint expressions by $t(s)=s$ for all $s\in \mathcal{S}$, $t(*)=\top$, $t(e\cap d)=t(e)\wedge t(d)$, and $t(-e)=\neg tr(e)$.
        \item $T(e\lesssim d)=tr(e)\ptwiddle tr(d)$.
        \item $T(\alpha\twiddle_e \beta)=\alpha\wedge tr(e)\ptwiddle \beta$.
    \end{itemize}

    We extend $T$ to knowledge bases in $\textbf{Cond}(\LangdSt)$ via the usual means. That is, $T(\KB):=\{T(\phi)\mid \phi\in \KB\}$ for $\KB\subseteq \textbf{Cond}(\LangdSt)$
\end{definition}


Note here that the translated sentence is no longer than the original sentence and computable in linearly many steps in the size of the original sentence. Abusing notation, we translate PDSL models into KLM propositional models with the same function $T$. Notice here how this translation essentially treats standpoint symbols as propositional atoms which may or may not be true at each precisification.

\begin{definition}
    Given an SPSS $M=(\Pi,\sigma,\tau,\prec)$, we define the propositional translated model of $M$ as $T(M)=(W',l',\prec')$ where
    \begin{itemize}
        \item $W'=\Pi$.
        \item $l(\pi)=\tau(\pi)\cup\{s\in \mathcal{S}\mid \pi\in\sigma(s)\}$.
        \item $\prec'=\prec$.
    \end{itemize}
\end{definition}

Our next result tells us that our translations between language and models correspond as expected.

\begin{proposition}\label{proposition:translation-of-PDSL-KB-into-Prop-KB}
    For any $\phi\in \textbf{Cond}(\LangdSt)$ and any SPSS $M$ we have that $M\Vdash \phi \textit{ iff. }T(M)\Vdash T(\phi)$.
\end{proposition}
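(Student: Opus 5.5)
The plan is to first establish a \emph{truth-transfer lemma} for Boolean formulas and standpoint expressions, and then do a case split on the two shapes of $\phi\in\textbf{Cond}(\LangdSt)$. Throughout, $\llbracket\cdot\rrbracket_{T(M)}$ denotes the set of states of $T(M)$ satisfying a formula.

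The lemma has two parts.
\begin{enumerate}
    \item For every standpoint expression $e\in\mathcal{E}$ we have $\sigma(e)=\llbracket tr(e)\rrbracket_{T(M)}$.
    \item For every Boolean $\alpha$ over $\mathcal{P}$ we have $\llbracket\alpha\rrbracket_M=\llbracket\alpha\rrbracket_{T(M)}$.
\end{enumerate}

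Part 1 is proved by induction on $e$.
\begin{itemize}
    \item For $e=s$: $\pi\in\sigma(s)$ iff $s\in l'(\pi)$, by the definition of $l'$.
    \item For $e=*$: $\sigma(*)=\Pi=W'=\llbracket\top\rrbracket_{T(M)}$.
    \item The cases $-e$ and $e\cap d$ follow from the conditions on $\sigma$ (complement and intersection) and the clauses for $\neg$ and $\wedge$.
\end{itemize}

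Part 2 is proved by induction on $\alpha$. The only nontrivial case is an atom $p\in\mathcal{P}$. Here $l'(\pi)\cap\mathcal{P}=\tau(\pi)$, because $\mathcal{P}$ and $\mathcal{S}$ are disjoint vocabularies, which I assume implicitly. Combining the two parts gives $\sigma(e)\cap\llbracket\alpha\rrbracket_M=\llbracket\alpha\wedge tr(e)\rrbracket_{T(M)}$.

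Next, I check that $T(M)$ really is a preferential interpretation in the sense of Definition~\ref{definition:propositional-preferential-interpretation}. Since $\prec'=\prec$ is a well-founded strict partial order on $\Pi=W'$, every nonempty $\llbracket\alpha\rrbracket_{T(M)}$ has a minimal element. This is the mild subtlety: if $\llbracket\alpha\rrbracket$ is empty there is no minimal element, but then satisfaction of any conditional with that antecedent is vacuous. I read the definition's requirement as applying to nonempty sets.

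The two cases of $\phi$ are then handled as follows.
\begin{itemize}
    \item \textbf{Case $\phi=\alpha\twiddle_e\beta$.} $M\Vdash\phi$ holds iff every $\pi$ satisfies $\pi\notin\min_\prec(\sigma(e)\cap\llbracket\alpha\rrbracket)$ or $\pi\in\llbracket\beta\rrbracket$. This is equivalent to $\min_\prec(\sigma(e)\cap\llbracket\alpha\rrbracket_M)\subseteq\llbracket\beta\rrbracket_M$. By the lemma, and since the orders coincide, this equals $\min_{\prec'}\llbracket\alpha\wedge tr(e)\rrbracket_{T(M)}\subseteq\llbracket\beta\rrbracket_{T(M)}$, which is exactly $T(M)\Vdash\alpha\wedge tr(e)\ptwiddle\beta$.
    \item \textbf{Case $\phi=e\lesssim d$.} The satisfaction clause does not depend on $\pi$. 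So $M\Vdash\phi$ iff $\min_\prec\sigma(e)\subseteq\sigma(d)$. By part 1 of the lemma, this is iff $\min_{\prec'}\llbracket tr(e)\rrbracket\subseteq\llbracket tr(d)\rrbracket$, that is, iff $T(M)\Vdash tr(e)\ptwiddle tr(d)$.
\end{itemize}

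The main obstacle is purely bookkeeping: getting the standpoint-expression induction right, and making sure minimality is computed over identical sets under identical orders. Once the lemma is in place, both cases are immediate unfoldings of the definitions. I would not need any earlier proposition beyond the definitions themselves.
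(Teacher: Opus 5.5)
Your proposal is correct and follows essentially the same route as the paper. Both use an inductive lemma $\sigma(e)=\llbracket tr(e)\rrbracket_{T(M)}$ and the agreement $\llbracket\alpha\rrbracket_M=\llbracket\alpha\rrbracket_{T(M)}$ for Boolean $\alpha$ (which the paper merely asserts), then unfold the two cases $\alpha\twiddle_e\beta$ and $e\lesssim d$ using $\prec'=\prec$; your additional remarks on the implicit disjointness of $\mathcal{P}$ and $\mathcal{S}$, and on $T(M)$ being a genuine preferential interpretation via well-foundedness, are sound clarifications of points the paper leaves implicit.
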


This translation provides us with a context in which defining non-monotonic entailment relations for conditional knowledge bases in PDSL can be extrapolated from their propositional counterpart. The translation function $T$ is also injective on models, meaning that no two non-equivalent SPSSes induce identical preferential models.

\begin{proposition}\label{proposition:T-injective-on-models}
    $T$ is injective on models.
\end{proposition}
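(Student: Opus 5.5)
The plan is to show directly that $T(M_1)=T(M_2)$ forces $M_1=M_2$. Let $M_1=(\Pi_1,\sigma_1,\tau_1,\prec_1)$ and $M_2=(\Pi_2,\sigma_2,\tau_2,\prec_2)$ be SPSSes, and suppose $T(M_1)=(W_1',l_1',\prec_1')$ equals $T(M_2)=(W_2',l_2',\prec_2')$. By the definition of the translated model, $W_i'=\Pi_i$ and $\prec_i'=\prec_i$. So equality of the triples immediately gives $\Pi_1=\Pi_2$ (call it $\Pi$) and $\prec_1=\prec_2$. It remains to recover $\tau$ and $\sigma$ from the single labelling $l'$.

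For this I will use that the labelling splits cleanly along the vocabulary $\mathcal{P}'=\mathcal{P}\cup\mathcal{S}$. For every $\pi\in\Pi$ we have $l_i'(\pi)=\tau_i(\pi)\cup\{s\in\mathcal{S}\mid \pi\in\sigma_i(s)\}$, with $\tau_i(\pi)\subseteq\mathcal{P}$. Assuming, as is implicit in forming $\mathcal{P}'$ as the atoms of $\LangProp$, that $\mathcal{P}\cap\mathcal{S}=\emptyset$, we get
\[
\tau_i(\pi)=l_i'(\pi)\cap\mathcal{P},
\qquad
\{s\mid\pi\in\sigma_i(s)\}=l_i'(\pi)\cap\mathcal{S}.
\]
Since $l_1'=l_2'$, it follows that $\tau_1=\tau_2$. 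It also follows that for each $s\in\mathcal{S}$ and each $\pi$, $\pi\in\sigma_1(s)$ iff $s\in l'(\pi)$ iff $\pi\in\sigma_2(s)$, so $\sigma_1$ and $\sigma_2$ agree on atomic standpoints.

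Finally, $\sigma$ is determined on all of $\mathcal{E}$ by its values on $\mathcal{S}$. I will show this by structural induction on expressions, using the clauses $\sigma(*)=\Pi$, $\sigma(e\cap d)=\sigma(e)\cap\sigma(d)$ and $\sigma(-e)=\Pi\setminus\sigma(e)$, all of which hold identically in both structures because $\Pi$ is shared. Hence $\sigma_1=\sigma_2$ and $M_1=M_2$.

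The only real obstacle is the disjointness of $\mathcal{P}$ and $\mathcal{S}$. If a symbol could be both an atom and a standpoint name, the labelling could not separate $\tau$ from $\sigma$, so I will state this assumption explicitly as part of the construction of $\mathcal{P}'$. A secondary point is that "injective" is meant literally, equality of structures rather than up to isomorphism. If it were intended up to isomorphism, the same argument transports along a bijection of precisifications.
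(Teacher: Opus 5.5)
Your proof is correct and follows essentially the same route as the paper's: you read off $\Pi$ and $\prec$ directly from $T(M)$, recover $\sigma$ on atomic standpoints via $\pi\in\sigma(s)$ iff $s\in l(\pi)$, and recover $\tau$ from the rest of the label (the paper writes $l(\pi)\setminus\{s\in\mathcal{S}\mid \pi\in\sigma(s)\}$ where you write $l(\pi)\cap\mathcal{P}$). Beyond the paper, you make explicit two points it leaves implicit but that the argument genuinely needs: that $\mathcal{P}\cap\mathcal{S}=\emptyset$, and that $\sigma$ on all of $\mathcal{E}$ is fixed by its values on $\mathcal{S}$ by structural induction.
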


In fact, the image of $T$ in the set of preferential interpretations are exactly the set preferential interpretations $\I$ such that $\llbracket s\rrbracket_\I\neq \emptyset$ for all $s\in \mathcal{S}$. Clearly, it follows that $T$ is not surjective on models, since there are preferential interpretations over $\mathcal{P'}=\mathcal{P}\cup\mathcal{S}$ such that $\llbracket s\rrbracket_\I= \emptyset$ for some $s\in\mathcal{S}$. Moreover, for any preferential interpretation $\I=(W,l,\prec)$ in the image of $T$ we can construct its preimage $T^{-1}(\I):=(\Pi',\sigma',\tau',\prec')$ by: $\Pi'=W$; $\prec'=\prec$; for each $w\in W$, $\tau'(w)=l(w)\cap\mathcal{P}$; and for each $s\in \mathcal{S}$, $\sigma'(s)=\{w\in W\mid s\in l(w)\}$ and $\sigma'(*)=W$. $\sigma'$ is defined inductively on complex standpoint expressions. In our next section, we use this translation to lift methods of non-monotonic entailment from the propositional defeasible reasoning setting to the PDSL setting.




\section{Non-Monotonic Entailment}\label{section:NMR-for-PDSL}

In this section, we attempt to answer in the PDSL setting the same question posed for the propositional setting  by Lehmann and Magidor in \cite{lehmann:conditionalentail}: ``\textit{What does a conditional knowledge base entail?}'' 

In particular, rational defeasible entailments, which are defined semantically via a \textit{single ranked interpretation}. We first consider the general case of this question as investigated by et al. \cite{casini:beyondratclosure}, and then highlight two well-known forms of inference in the propositional case: rational closure \cite{lehmann:conditionalentail,Pearl:SystemZ} and lexicographic closure \cite{lehmann:lexicographicreason}. Given Proposition \ref{proposition:translation-of-PDSL-KB-into-Prop-KB}, a clear intuition is to build an SPSS model which characterises a non-monotonic entailment $\dentails$ via the translated model $T(M)$. That is, for a knowledge base $\KB\subseteq \textbf{Cond}(\LangdSt)$ we choose a model $M$ to characterise our entailment in the PDSL case, such that $T(M)$ induces the corresponding entailment result in the propositional case. Therefore, if we are given  $\KB\subseteq \textbf{Cond}(\LangdSt)$ and propositional defeasible entailment $\dentails_r$ with an associated ranking strategy $r:2^{\LangProp}\longrightarrow \mathscr{R}$ which we wish to define in the conditional-PDSL case, we proceed as follows: 

\begin{enumerate}
    \item Translate $\KB$ into its propositional form $T(\KB)$.
    \item Compute the semantic structure for $\dentails_r$ in the propositional case via $r(T(\KB))$.
    \item Using the fact that   nature of $T$ is injective on models, we define $T^{-1}(r(T(\KB))$ as the semantic model which characterises $\dentails$ for $\textbf{Cond}(\LangdSt)$. This relies on the following Lemma.
\end{enumerate}

\begin{lemma}\label{lemma:rankings-are-in-the-image} If $\KB$ is a satisfiable conditional PDSL knowledge base and $r$ is a well-defined ranking strategy, the preferential interpretation $r(T(\KB))$ is in the semantic image of ~$T$. 
\end{lemma}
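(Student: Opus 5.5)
The image of $T$ consists exactly of the preferential interpretations $\I$ over $\mathcal{P}'=\mathcal{P}\cup\mathcal{S}$ with $\llbracket s\rrbracket_\I\neq\emptyset$ for every $s\in\mathcal{S}$. So the plan is to show that $\llbracket s\rrbracket_{r(T(\KB))}\neq\emptyset$ for each $s\in\mathcal{S}$. Here $r(T(\KB))$ is the ranked interpretation $\I_\R$ induced by a ranking function $\R$, whose worlds are the valuations of finite rank. Two smaller points also need checking: that $\I_\R$ satisfies the minimality/well-foundedness condition of Definition~\ref{def:standpoint-structure}, and that the preimage construction is legitimate. The first holds because ranks lie in $\mathds{N}$. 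For the second, $\sigma'$ is defined by $\sigma'(s)=\{w\mid s\in l(w)\}$ and extended inductively, and then $\sigma'(*)=W$, intersections and complements behave as required.

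For nonemptiness I would argue by contraposition through classical preservation. Suppose $\llbracket s\rrbracket_{r(T(\KB))}=\emptyset$ for some $s$. Then $r(T(\KB))\Vdash s\ptwiddle\bot$ vacuously, so $T(\KB)\dentails_r s\ptwiddle\bot$. Since $r$ is well-defined, $\dentails_r$ satisfies classical preservation, and therefore $T(\KB)\vDash_{pref} s\ptwiddle\bot$.

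It remains to contradict this using satisfiability of $\KB$. Take an SPSS $M$ with $M\Vdash\KB$. By Proposition~\ref{proposition:translation-of-PDSL-KB-into-Prop-KB}, $T(M)\Vdash T(\phi)$ for every $\phi\in\KB$, so $T(M)\Vdash T(\KB)$. One must check that $T(M)$ is a preferential interpretation in the sense of Definition~\ref{definition:propositional-preferential-interpretation}. This follows from the well-foundedness clause on $\prec$: every nonempty $\llbracket\alpha\rrbracket$ has minimal elements. From $T(\KB)\vDash_{pref} s\ptwiddle\bot$ we get $T(M)\Vdash s\ptwiddle\bot$, so $\min_\prec\llbracket s\rrbracket_{T(M)}=\emptyset$. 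By well-foundedness this forces $\llbracket s\rrbracket_{T(M)}=\sigma(s)=\emptyset$, which contradicts the SPSS requirement $\sigma(s)\neq\emptyset$. Hence every $s$ is satisfied by some world of $r(T(\KB))$, and this interpretation lies in the image of $T$.

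The main obstacle is the step that turns a vacuous conditional into emptiness. It needs care about the convention that minimal elements exist in preferential interpretations, in particular in the case $\llbracket s\rrbracket=\emptyset$, where $\min\emptyset=\emptyset$ and $s\ptwiddle\bot$ holds trivially. I would handle this by using well-foundedness in both directions: $\llbracket s\rrbracket=\emptyset$ if and only if $\min\llbracket s\rrbracket=\emptyset$. Everything else is bookkeeping with the definitions of $T$ and $T^{-1}$.
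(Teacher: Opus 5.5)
Your proposal is correct and follows essentially the same argument as the paper: classical preservation of $r$ reduces $\llbracket s\rrbracket_{r(T(\KB))}\neq\emptyset$ to $T(\KB)\nvDash_{pref} s\ptwiddle\bot$, and this is witnessed by $T(M)$ for a model $M$ of $\KB$, using $\sigma(s)\neq\emptyset$. The differences are cosmetic. You argue by contradiction and check $T(M)\nVdash s\ptwiddle\bot$ directly via $\llbracket s\rrbracket_{T(M)}=\sigma(s)$ and well-foundedness, whereas the paper routes this through $\Box_s\bot$, $\top\twiddle_s\bot$ and Proposition~\ref{proposition:translation-of-PDSL-KB-into-Prop-KB}. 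You also make explicit the well-foundedness checks that the paper leaves implicit.
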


We define $\dentails_r$ for conditional-PDSL as follows.

\begin{definition}
  Consider $\KB\subseteq \textbf{Cond}(\LangdSt)$, $\phi\in\LangdSt$ and a ranking strategy $r$. We say
${\KB\dentails_r \phi}$ \textit{ iff } $T^{-1}(r(T(\KB))\Vdash \phi$.
\end{definition}

It is worth noting here that the equivalence of models in the conditional-PDSL case and the original KLM case means that algorithms for this case can be used equivalently in the PDSL case. In the most general case for ranked interpretations, this is given by the $\texttt{DefeasibleEntail}$ algorithm~\cite{casini:beyondratclosure} which takes a ranking-based model in the propositional case and uses SAT-solving methods to compute whether a conditional is satisfied by this model. We therefore obtain the following corollary. 

\begin{corollary}
    Let $\dentails_r$ be the entailment associated with the ranking strategy $r$, and let $\KB$ be a $\textbf{Cond}(\LangdSt)$ knowledge base and $\phi\in \textbf{Cond}(\LangdSt)$. Then, $\KB\dentails_r \phi \text{ iff. } \mathtt{DefeasibleEntail}(T(\KB),r(T(\KB)), T(\phi))=\text{True}$.
\end{corollary}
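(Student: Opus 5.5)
The corollary follows by chaining the definition of $\dentails_r$ for conditional PDSL with Proposition~\ref{proposition:translation-of-PDSL-KB-into-Prop-KB} and the correctness of $\mathtt{DefeasibleEntail}$ from Casini et al.

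First I dispose of the degenerate case where $\KB$ is unsatisfiable. Here $T^{-1}(r(T(\KB)))$ need not exist, so I either assume satisfiability, as Lemma~\ref{lemma:rankings-are-in-the-image} does, or adopt the convention that an unsatisfiable $\KB$ entails everything. Under that convention I would check that $T(\KB)$ then entails everything in the propositional case via classical preservation. This is the step I expect to need care, since the paper's definition is silent here. The cleanest route is to restrict the statement to satisfiable $\KB$, which the definition of $\dentails_r$ implicitly requires.

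For satisfiable $\KB$, Lemma~\ref{lemma:rankings-are-in-the-image} gives that $r(T(\KB))$ lies in the image of $T$. By Proposition~\ref{proposition:T-injective-on-models}, the preimage $M:=T^{-1}(r(T(\KB)))$ is a well-defined SPSS with $T(M)=r(T(\KB))$, via the explicit construction of $T^{-1}$ given after that proposition. Then
\[\KB\dentails_r\phi \iff M\Vdash\phi \iff T(M)\Vdash T(\phi) \iff r(T(\KB))\Vdash T(\phi).\]
The first equivalence is the definition of $\dentails_r$. The second uses Proposition~\ref{proposition:translation-of-PDSL-KB-into-Prop-KB}, which applies because $\phi\in\textbf{Cond}(\LangdSt)$. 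The third holds since $T(M)=r(T(\KB))$.

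Finally, $r(T(\KB))\Vdash T(\phi)$ is by definition $T(\KB)\dentails_r T(\phi)$ in the propositional setting, with $T(\KB)\subseteq\LangProp$ and $T(\phi)\in\LangProp$. By the correctness result for $\mathtt{DefeasibleEntail}$ from Casini et al., this holds iff $\mathtt{DefeasibleEntail}(T(\KB),r(T(\KB)),T(\phi))=\textit{True}$, which completes the chain.
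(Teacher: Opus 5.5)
Your proof is correct and takes the same route the paper intends. The paper gives no separate proof; it appeals to the equivalence of models: unfold the definition of $\KB\dentails_r\phi$, transfer satisfaction along $T(T^{-1}(r(T(\KB))))=r(T(\KB))$ via Lemma~\ref{lemma:rankings-are-in-the-image} and Proposition~\ref{proposition:translation-of-PDSL-KB-into-Prop-KB}, and then invoke the correctness of $\mathtt{DefeasibleEntail}$.

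Restricting to satisfiable $\KB$ is the right call. The alternative ``unsatisfiable entails everything'' convention would make the statement false, and the classical-preservation check you planned would fail. For example, take $\KB=\{\top\twiddle_s\bot\}$. Its translation $T(\KB)=\{\top\wedge s\ptwiddle\bot\}$ has a one-state model with valuation $\emptyset$, so $T(\KB)\nvDash_{pref}\neg p\wedge\top\ptwiddle\bot$. By classical preservation, $\mathtt{DefeasibleEntail}$ then returns False on the query $T(\neg p\twiddle_*\bot)$.
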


Another upshot of this is that checking a query in  $\textbf{Cond}(\LangdSt)$  is in the same complexity class as the translated query for conditionals in propositional logic. However, although the entailment we have defined can only be computed with knowledge bases in  $\textbf{Cond}(\LangdSt)$, we are able to compute queries for larger fragments of $\LangdSt$. Specifically, we want to consider query-answering for a fragment of PDSL where we can answer queries with defeasible and classical diamond standpoint modalities. Using the syntax proposed in \ref{equation:new-PDSL-syntax} this can be formulated by:

\begin{definition}
     The fragment $\textbf{NegCond}(\LangdSt)\subseteq \LangdSt$ is the fragment of $\LangdSt$ consisting of all formulas of the form
    \[\phi::=\alpha\twiddle_e \beta\mid e\lesssim d\mid\ \naf\ \phi \]
where $\naf:=\neg \Box_*$, $\alpha$ and $\beta$ are Boolean formulas, and $e,d\in\mathcal{E}$ are standpoint expressions.
\end{definition}

 Note that the compound unary connective $\naf$ allows us to introduce classical or defeasible diamond statements into our language by Proposition \ref{proposition:we-can-express-PDSL-in-terms-of-situated-conditionals}, where $\Diamond_e\alpha$ is equivalent to $\naf (\alpha\twiddle_e \bot)$ and $\dposs_e\alpha$ is equivalent to $\naf(\top\twiddle_e \alpha)$. The syntax chosen for $\naf$ is used due to the similarity with negation-as-failure in the logic programming tradition. We note that $\naf$ is a strictly weaker notion than $\neg$ and that $\naf$ behaves according to the principle of negation as failure in our semantics.

\begin{proposition}\label{proposition:naf-proposition}
    Given an SPSS $M$ and $\phi\in \textbf{Cond}(\LangdSt)$ then we have:
    \begin{enumerate}
        \item $M\Vdash \neg \phi$ implies $M\Vdash \naf \phi$, and the converse does not hold in general.
        \item $M\Vdash \naf \phi$ iff. $M\nVdash \phi$.
    \end{enumerate}
\end{proposition}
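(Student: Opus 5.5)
The proof reduces to unfolding $\naf\phi = \neg\Box_*\phi$ with the clauses of Definition~\ref{def:model-satisfaction-stdptstructure}, using the fact that $\sigma(*)=\Pi$. I will prove item 2 first, since item 1 follows from it.

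\emph{Item 2.} For any $\pi\in\Pi$, $M,\pi\Vdash \Box_*\phi$ holds iff $M,\pi'\Vdash\phi$ for all $\pi'\in\sigma(*)=\Pi$, that is, iff $M\Vdash\phi$. This condition does not depend on $\pi$, so $\Box_*\phi$ is either true at every precisification or false at every one. Hence $M,\pi\Vdash\neg\Box_*\phi$ iff $M\nVdash\phi$, and again this does not depend on $\pi$. Taking all $\pi\in\Pi$ (and noting $\Pi\neq\emptyset$, since $\sigma(s)\neq\emptyset$), we get $M\Vdash\naf\phi$ iff $M\nVdash\phi$.

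\emph{Item 1, implication.} Suppose $M\Vdash\neg\phi$. Pick any $\pi\in\Pi$, which exists because $\Pi$ is nonempty. Then $M,\pi\nVdash\phi$, so $M\nVdash\phi$, and item 2 gives $M\Vdash\naf\phi$.

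\emph{Item 1, converse fails.} I need an SPSS in which $\phi$ fails at some precisification but holds at another. The natural choice is $\phi = \top\twiddle_* p$. Take $\Pi=\{\pi_1,\pi_2\}$ with $\pi_1\prec\pi_2$, $\tau(\pi_1)=\emptyset$, $\tau(\pi_2)=\{p\}$, and $\sigma(s)=\Pi$ for all $s$. Then $\min_\prec(\sigma(*)\cap\llbracket\top\rrbracket)=\{\pi_1\}$, and $\pi_1\notin\llbracket p\rrbracket$. So $M,\pi_1\nVdash\phi$, hence $M\nVdash\phi$, and by item 2 $M\Vdash\naf\phi$. On the other hand, $\pi_2$ is not minimal, so $M,\pi_2\Vdash\phi$, and therefore $M\nVdash\neg\phi$.

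The only point needing care is the counterexample: it must use the \emph{local} behaviour of situated conditionals, since a formula that is globally uniform would make the converse hold. Sharpening statements $e\lesssim d$ are such global formulas, which is why the counterexample uses a situated conditional instead.
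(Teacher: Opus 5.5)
Your proof is correct and is essentially the paper's argument: both unfold $\naf\phi=\neg\Box_*\phi$ using $\sigma(*)=\Pi$ and $\Pi\neq\emptyset$. The only structural difference is that you prove item 2 first and derive the implication in item 1 from it, while the paper argues the two items separately.

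Your counterexample $\top\twiddle_* p$ with $\pi_1\prec\pi_2$ uses the same two-precisification model as the paper, but it is more careful than the paper's. The formula the paper writes, $\bot\twiddle_*\neg p$, holds at every precisification of every SPSS, so it cannot witness the failure of the converse. Presumably $\neg p\twiddle_*\bot$ was meant; that formula works in the paper's model, but only because the model has a single $\neg p$-precisification, which makes it agree locally with $p$.
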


Since for each ranking strategy $r$ we define the entailment $\dentails_r$ relative to a single SPSS, the symbol $\naf$ acts as negation as failure on the level of non-monotonic entailment as well.

\begin{corollary}
    Given a ranking strategy $r$, a formula $\phi\in \textbf{Cond}(\LangdSt)$ and a knowledge base $\KB\subseteq \textbf{Cond}(\LangdSt)$:  $\KB\dentails_r \naf \phi$ iff $\KB\ndentails_r\phi$ iff $\mathtt{DefeasibleEntail}(T(\KB),r_{T(\KB)}, T(\phi))=\text{False}$.
\end{corollary}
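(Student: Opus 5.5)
The corollary strings together three equivalences. The plan is to prove each link directly from the definition of $\dentails_r$ for conditional PDSL, Proposition~\ref{proposition:naf-proposition}, Proposition~\ref{proposition:translation-of-PDSL-KB-into-Prop-KB}, and the correctness of $\mathtt{DefeasibleEntail}$. Throughout, assume $\KB$ is satisfiable, so that Lemma~\ref{lemma:rankings-are-in-the-image} applies. The SPSS $M_\KB:=T^{-1}(r(T(\KB)))$ is then well defined, and by injectivity of $T$ on models (Proposition~\ref{proposition:T-injective-on-models}) it satisfies $T(M_\KB)=r(T(\KB))$.

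The first step shows $\KB\dentails_r \naf\phi$ iff $\KB\ndentails_r\phi$. By definition, $\KB\dentails_r\naf\phi$ means $M_\KB\Vdash\naf\phi$. Since $\phi\in\textbf{Cond}(\LangdSt)$, item 2 of Proposition~\ref{proposition:naf-proposition} applied to the single SPSS $M_\KB$ gives $M_\KB\Vdash\naf\phi$ iff $M_\KB\nVdash\phi$. The latter is, again by definition, $\KB\ndentails_r\phi$. This step hinges on $\dentails_r$ being defined by one fixed model rather than by quantifying over a class of models. That is exactly why negation as failure transfers from satisfaction to entailment; no analogous claim would hold for $\vDash_P$.

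The second step shows $\KB\ndentails_r\phi$ iff $\mathtt{DefeasibleEntail}(T(\KB),r(T(\KB)),T(\phi))=\text{False}$. This follows by negating both sides of the preceding corollary, which states $\KB\dentails_r\phi$ iff the algorithm returns True, together with the fact that the algorithm always terminates with a Boolean output. If one prefers to unfold that corollary, the chain is:
\begin{itemize}
\item $M_\KB\Vdash\phi$ iff $T(M_\KB)\Vdash T(\phi)$, by Proposition~\ref{proposition:translation-of-PDSL-KB-into-Prop-KB};
\item $T(M_\KB)\Vdash T(\phi)$ iff $r(T(\KB))\Vdash T(\phi)$, since $T(M_\KB)=r(T(\KB))$;
\item $r(T(\KB))\Vdash T(\phi)$ iff the algorithm returns True, by the result of Casini et al.\ for well-defined ranking strategies.
\end{itemize}
The notation $r_{T(\KB)}$ in the statement is read as $r(T(\KB))$.

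The main obstacle is not the logic but well-definedness. We must ensure $T^{-1}(r(T(\KB)))$ exists, which is the content of Lemma~\ref{lemma:rankings-are-in-the-image}. We must also ensure that the ranked interpretation $r(T(\KB))$ is the preferential interpretation induced by the ranking function, so that Proposition~\ref{proposition:translation-of-PDSL-KB-into-Prop-KB} can be applied to it. I would state explicitly that the satisfiability hypothesis inherited from the lemma is in force. For unsatisfiable $\KB$, the convention needs to be checked separately: a well-defined $r$ then yields the empty ranking, and that falls outside the image of $T$ because $\sigma(s)\neq\emptyset$ is required.
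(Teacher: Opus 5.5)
Your proposal is correct and follows the paper's own route: item~2 of Proposition~\ref{proposition:naf-proposition} applied to the single SPSS $T^{-1}(r(T(\KB)))$ that defines $\dentails_r$, then the negation of the preceding corollary on $\mathtt{DefeasibleEntail}$; your explicit satisfiability caveat via Lemma~\ref{lemma:rankings-are-in-the-image} is a fair addition, since the paper leaves it tacit. One correction to that side remark: for unsatisfiable $\KB$ the ranking $r(T(\KB))$ need not be empty. For example, $\KB=\{\top\twiddle_s\bot\}$ is unsatisfiable, yet $T(\KB)=\{\top\wedge s\ptwiddle\bot\}$ is propositionally consistent, so classical preservation keeps the $\neg s$-valuations at finite rank; $r(T(\KB))$ falls outside the image of $T$ only because some $\llbracket s\rrbracket_{r(T(\KB))}$ is empty.
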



This in turn shows that $\naf \phi$ queries (including diamond modality queries) can be computed using a single call to the \texttt{DefeasibleEntail} algorithm, and that the query language can be expanded in this way with no effects to the complexity. We also note that we can compute queries which are conjunctions of formulas in $\textbf{NegCond}(\LangdSt)$ by just considering a conjunction as a set of its conjuncts and computing \texttt{DefeasibleEntail} for each such conjunct. Disjunctions of conditionals in general cannot be computed by evaluating each disjunct. For example, it may be the case, for an SPSS $M$, that  $M\Vdash \phi_1\vee\phi_2$ while both $M\nVdash \phi_1$ and $M\nVdash \phi_2$. This is not a feature specific to standpoint logics, but to the more general defeasible modal logic \textbf{K} as defined in \cite{britzvarzin:defeasiblemodalities}, and can be seen in the example below:

\begin{example}
    Consider the following model where the set of standpoint names is empty, $\mathcal{P}=\{p,q,r,s\}$, and $M=(\Pi,\tau,\prec)$ where $\Pi=\{\pi_1,\pi_2,\pi_3,\pi_4\}$; $\prec=\{(\pi_2,\pi_1),(\pi_4\pi_3)\}$; $\tau(\pi_1)=\{r\}$, $\tau(\pi_2)=\{q,r\}$, $\tau(\pi_3)=\{p\}$ and $\tau(\pi_4)=\{p,s\}$. Clearly, $M\nVdash p\twiddle_* q$ and $M\nVdash r\twiddle_* s$ since $\min_\prec \llbracket r\rrbracket\nsubseteq \llbracket s\rrbracket$ and  $\min_\prec \llbracket p\rrbracket\nsubseteq \llbracket q\rrbracket$. However, note that $M,\pi\Vdash ( p\twiddle_* q)\vee  (r\twiddle_* s)$ iff one of the following condition is satisfied: $\pi\notin\min_\prec \llbracket p\rrbracket$ or $\pi\in \llbracket q\rrbracket$ or $\pi\notin\min_\prec \llbracket r\rrbracket$ or $\pi\in \llbracket s\rrbracket$. Each element of $\Pi$ satisfies one of these conditions and so $M\Vdash ( p\twiddle_* q)\vee  (r\twiddle_* s)$.

\end{example}

The last thing we note in the general case is how queries of diamond-statements behave in a more ``permissive'' way than in classical propositional standpoint logic. In particular, in our definition of non-monotonic entailment for a ranking strategy $r$, we essentially include in a standpoint $e$ every valuation which is allowable from the knowledge base. That is, we conclude $\Diamond_e\alpha$ (for a Boolean statement $\alpha$) whenever $\Diamond_e \alpha$ is not directly precluded from the knowledge base. Therefore, although for classical Boolean statements and box statements our logic is supraclassical, when it comes to classical diamond queries, our logic answers queries more in the style of brave reasoning. This is formalised by the propositions below. We recall here that $\vDash_P$ is the Tarskian notion of preferential entailment defined at the end of Section \ref{section:prelims}.

\begin{proposition}\label{proposition:supraclassicality-for-box-and-Boolean-statements}
    Let $\KB$ be a conditional PDSL knowledge base, and let $\alpha$ be a Boolean statement, and $r$ be a well-defined ranking strategy. Then $\KB\dentails_r \alpha$ iff $\KB\vDash_P \alpha$. Furthermore, $\KB\dentails_r\Box_e \alpha$ iff $\KB\vDash_P\Box_e \alpha$.
\end{proposition}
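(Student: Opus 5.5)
The plan is to reduce both claims to classical preservation of the propositional ranking strategy, using the translation $T$. First recall that a Boolean $\alpha$ holds globally in an SPSS $M$ iff $M\Vdash \neg\alpha\twiddle_*\bot$. Likewise, by Proposition~\ref{proposition:we-can-express-PDSL-in-terms-of-situated-conditionals} (and the global form noted after it), $M\Vdash\Box_e\alpha$ iff $M\Vdash\neg\alpha\twiddle_e\bot$. Both queries are therefore conditionals in $\textbf{Cond}(\LangdSt)$ with consequent $\bot$. Their translations are $T(\neg\alpha\twiddle_*\bot)=\neg\alpha\wedge\top\ptwiddle\bot$ and $T(\neg\alpha\twiddle_e\bot)=\neg\alpha\wedge tr(e)\ptwiddle\bot$, which are exactly the "classical" conditionals governed by classical preservation. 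So it suffices to treat a query $\phi=\gamma\twiddle_e\bot$ with $\gamma$ Boolean and show $\KB\dentails_r\phi$ iff $\KB\vDash_P\phi$.

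Assume first that $\KB$ is satisfiable, so Lemma~\ref{lemma:rankings-are-in-the-image} applies and $T^{-1}(r(T(\KB)))$ is defined. By Proposition~\ref{proposition:translation-of-PDSL-KB-into-Prop-KB} applied to this model, $\KB\dentails_r\phi$ iff $r(T(\KB))\Vdash T(\phi)$, i.e. iff $T(\KB)\dentails_r T(\phi)$ propositionally. Since $T(\phi)$ has the form $\delta\ptwiddle\bot$ and $r$ is well-defined, classical preservation gives that this holds iff $T(\KB)\vDash_{pref}T(\phi)$. It remains to show that $T(\KB)\vDash_{pref}T(\phi)$ iff $\KB\vDash_P\phi$.

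The direction from left to right is immediate. Given an SPSS $M\Vdash\KB$, Proposition~\ref{proposition:translation-of-PDSL-KB-into-Prop-KB} gives $T(M)\Vdash T(\KB)$, hence $T(M)\Vdash T(\phi)$, hence $M\Vdash\phi$.

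The converse is the main obstacle. A preferential interpretation $\I\Vdash T(\KB)$ need not lie in the image of $T$, since some $\llbracket s\rrbracket_\I$ may be empty. If it does lie in the image, we pull it back via $T^{-1}$. Otherwise, suppose $\I\Vdash T(\KB)$ but $\I\nVdash\delta\ptwiddle\bot$, so some world $w$ satisfies $\delta$. I would build $\I'$ as the disjoint union of $\I$ with a model $T(M_0)$, where $M_0\Vdash\KB$ is any SPSS (one exists by satisfiability). The worlds of $\I$ and of $T(M_0)$ are left incomparable, and the union is well-founded because both components are.

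The key check is that minimal elements behave well under this union. For every Boolean $\beta$, the set $\min\llbracket\beta\rrbracket_{\I'}$ is the union of the minimal sets from each component. Hence every conditional satisfied in both components is satisfied in $\I'$, so $\I'\Vdash T(\KB)$. Moreover $w$ still satisfies $\delta$, so $\I'\nVdash\delta\ptwiddle\bot$. Now $\I'$ lies in the image of $T$, since each $s$ is true somewhere in $T(M_0)$. Then $M=T^{-1}(\I')$ satisfies $M\Vdash\KB$ and $M\nVdash\phi$ by Proposition~\ref{proposition:translation-of-PDSL-KB-into-Prop-KB}, contradicting $\KB\vDash_P\phi$.

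Finally, the case where $\KB$ is unsatisfiable gives $\KB\vDash_P\phi$ vacuously. Here I would argue via classical preservation that $T(\KB)\dentails_r\top\ptwiddle\bot$, so the induced model is empty and satisfies everything. This is handled by whatever convention the definition of $\dentails_r$ adopts for this case, and I expect it to need only a brief remark.
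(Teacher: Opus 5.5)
For satisfiable $\KB$ your argument is correct. This is the only case in which $\dentails_r$ is defined at all, since $T^{-1}(r(T(\KB)))$ only makes sense via Lemma~\ref{lemma:rankings-are-in-the-image}. Several of your steps coincide with the paper's proof: the reduction to $\neg\alpha\twiddle_*\bot$ and $\neg\alpha\twiddle_e\bot$, the use of classical preservation, and the step from $T(\KB)\vDash_{pref}T(\phi)$ to $\KB\vDash_P\phi$.

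You diverge on the direction $\KB\vDash_P\phi\Rightarrow\KB\dentails_r\phi$, which you call the main obstacle. The paper treats it as immediate. A well-defined $r$ satisfies inclusion, so $r(T(\KB))\Vdash T(\KB)$. By Proposition~\ref{proposition:translation-of-PDSL-KB-into-Prop-KB}, $T^{-1}(r(T(\KB)))$ is then itself an SPSS model of $\KB$, so every $\vDash_P$-consequence, of any shape, holds in it.

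You instead route this direction through classical preservation as well. That forces you to prove $\KB\vDash_P\phi\Rightarrow T(\KB)\vDash_{pref}T(\phi)$, and so to handle preferential interpretations outside the image of $T$. Your disjoint union treats the minimal sets correctly, and it has the small merit of never using inclusion. But it adds a step the paper avoids. $T^{-1}(\I')$ must be an SPSS, which requires well-foundedness on \emph{all} subsets. Definition~\ref{definition:propositional-preferential-interpretation} only guarantees minimal elements in Boolean-definable sets. So ``the union is well-founded because both components are'' is unjustified for an arbitrary $\I\Vdash T(\KB)$, and you would need an extra argument there. The paper sidesteps this because it only ever applies $T^{-1}$ to $r(T(\KB))$, whose order comes from ranks in $\mathbb{N}$.

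Your last paragraph, on unsatisfiable $\KB$, is wrong and should be dropped. PDSL-unsatisfiability of $\KB$ does not yield $T(\KB)\vDash_{pref}\top\ptwiddle\bot$, because the SPSS requirement $\sigma(s)\neq\emptyset$ has no propositional counterpart. For example, $\KB=\{\top\twiddle_s\bot\}$ (that is, $\Box_s\bot$) has no SPSS model. Yet $T(\KB)=\{\top\wedge s\ptwiddle\bot\}$ is satisfied by any interpretation without $s$-worlds. By classical preservation, $r(T(\KB))$ is then a nonempty ranking in which $s$ is never true. It lies outside the image of $T$, so $T^{-1}(r(T(\KB)))$ is undefined. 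An empty interpretation would not be in the image of $T$ either, since SPSSs have $\sigma(s)\neq\emptyset$.

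So there is no convention to fall back on. Like Lemma~\ref{lemma:rankings-are-in-the-image} and the definition of $\dentails_r$, the proposition only makes sense for satisfiable $\KB$. The paper's proof assumes this tacitly; yours should state it explicitly.
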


\begin{proposition}\label{proposition:diamonds-behave-like-brave-reasoning}
    Let $\KB$ be a conditional PDSL knowledge base, and let $\alpha$ be a Boolean statement, and $r$ be a (valid) ranking strategy. Then $\KB\dentails_r \Diamond_e \alpha$ iff there exists a model of $\KB$, $M$, such that $M\Vdash \Diamond_e \alpha$.
\end{proposition}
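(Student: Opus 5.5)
The plan is to translate the diamond query into the propositional setting, and then use classical preservation of the well-defined ranking strategy $r$.

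First unfold the definitions. By Proposition~\ref{proposition:we-can-express-PDSL-in-terms-of-situated-conditionals}, $M\Vdash\Box_e\neg\alpha$ iff $M\Vdash \alpha\twiddle_e\bot$. Since $\Diamond_e\alpha=\neg\Box_e\neg\alpha$ and both $\Box_e$ and $\Diamond_e$ are global, we get $M\Vdash\Diamond_e\alpha$ iff $M\nVdash \alpha\twiddle_e\bot$, which is the $\naf$ reading in Proposition~\ref{proposition:naf-proposition}. By Proposition~\ref{proposition:translation-of-PDSL-KB-into-Prop-KB} this is equivalent to $T(M)\nVdash \alpha\wedge tr(e)\ptwiddle\bot$, i.e.\ some state of $T(M)$ satisfies $\alpha\wedge tr(e)$.

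Left-to-right direction. Assume $\KB\dentails_r\Diamond_e\alpha$. By Lemma~\ref{lemma:rankings-are-in-the-image}, $M^*:=T^{-1}(r(T(\KB)))$ is a well-defined SPSS. It is a model of $\KB$: since $r$ is well defined, $\dentails_r$ satisfies inclusion, so $r(T(\KB))\Vdash T(\phi)$ for each $\phi\in\KB$, and Proposition~\ref{proposition:translation-of-PDSL-KB-into-Prop-KB} pulls this back to $M^*\Vdash\phi$. By definition $M^*\Vdash\Diamond_e\alpha$, so $M^*$ is the required witness. Satisfiability of $\KB$ is implicit here; if $\KB$ is unsatisfiable both sides should be treated as degenerate, and I would state that convention.

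Right-to-left direction. Suppose $M\Vdash\KB$ and $M\Vdash\Diamond_e\alpha$. Then $T(M)\Vdash T(\KB)$ by Proposition~\ref{proposition:translation-of-PDSL-KB-into-Prop-KB}, and $T(M)\nVdash \alpha\wedge tr(e)\ptwiddle\bot$. Hence $T(\KB)\nvDash_{pref}\alpha\wedge tr(e)\ptwiddle\bot$. Classical preservation of the basic defeasible entailment $\dentails_r$ then gives $r(T(\KB))\nVdash \alpha\wedge tr(e)\ptwiddle\bot$. Translating back through $T^{-1}$ and Proposition~\ref{proposition:translation-of-PDSL-KB-into-Prop-KB}, $M^*\nVdash\alpha\twiddle_e\bot$, i.e.\ $M^*\Vdash\Diamond_e\alpha$, so $\KB\dentails_r\Diamond_e\alpha$.

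The main obstacle is making sure the preferential-entailment failure is witnessed correctly. The translated model $T(M)$ is a legitimate preferential interpretation, so it can serve as the counter-model for $\vDash_{pref}$. One must also check that classical preservation applies to the formula $\alpha\wedge tr(e)$, which is Boolean over $\mathcal{P}\cup\mathcal{S}$. A further subtlety is that preferential interpretations not in the image of $T$, those with $\llbracket s\rrbracket=\emptyset$, could make $\vDash_{pref}$ differ from SPSS-entailment. This only matters in the other direction, which the argument avoids by using inclusion and Lemma~\ref{lemma:rankings-are-in-the-image} instead.
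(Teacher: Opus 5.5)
Your proof is correct and follows the paper's argument. The right-to-left direction is the same chain: $M\nVdash \alpha\twiddle_e\bot$, hence $T(M)\nVdash \alpha\wedge tr(e)\ptwiddle\bot$, hence $T(\KB)\nvDash_{pref}\alpha\wedge tr(e)\ptwiddle\bot$, followed by classical preservation and pulling back through $T^{-1}$. For the left-to-right direction the paper only says it is clear that $T^{-1}(r(T(\KB)))$ is a model of $\KB$; your justification via inclusion, Lemma~\ref{lemma:rankings-are-in-the-image} and Proposition~\ref{proposition:translation-of-PDSL-KB-into-Prop-KB}, together with your remark that satisfiability of $\KB$ is tacitly assumed, fills in that step correctly.
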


We note that this is a departure from convention in conditional reasoning, where supraclassicality holds for all non-defeasible parts of the logic. However, we believe that brave reasoning for diamond bound statements is in the spirit of non-monotonic prototypical reasoning, since we assume standpoints to hold all beliefs as possible, unless we learn evidence which shows otherwise.
After considering the general case for ranking-based entailment relations in conditional PDSL, we now focus on two well known inference systems from the propositional case.


\subsection{Rational Closure}

\begin{figure}

\begin{center}
\begin{tikzpicture}
[->,>=stealth',thick, auto,node_style/.style={circle,fill=black,minimum size=0.1mm},scale=0.8, every node/.style={scale=0.8}]


\node (rank0) at (-2,-0.25) [] {\textbf{0}};
\node (omega1) at (0,0) [label=below:\mbox{$\{\NLF,\VGF\}$}] {$\bullet$} ;
\node (omega2) at (2,0) [label=below:\mbox{$\emptyset$}] {$\bullet$} ;
\node (omega3) at (4,0)  [label=below:\mbox{$\{\VGF\}$}] {$\bullet$};
\node (omega4) at (6,0)  [label=below:\mbox{$\{\NLF,\VGF\}$}] {$\bullet$} ;

\path  (-2.5,0.5) edge [dashed, -] (10.5,0.5);

\node (rank1) at (-2,1.25) [] {\textbf{1}};
\node (omega5) at (0,1.5) [label=below:\mbox{$\emptyset$}] {$\bullet$} ;
\node (omega6) at (2,1.5) [label=below:\mbox{$\{\NLF\}$}] {$\bullet$} ;
\node (omega7) at (4,1.5)  [label=below:\mbox{$\{\VGF\}$}] {$\bullet$};
\node (omega8) at (6,1.5)  [label=below:\mbox{$\emptyset$}] {$\bullet$} ;
\node (omega9) at (8,1.5)  [label=below:\mbox{$\{\NLF\}$}] {$\bullet$} ;
\node (omega10) at (10,1.5)  [label=below:\mbox{$\{\NLF\}$}] {$\bullet$} ;

\path  (-2.5,2) edge [dashed, -] (10.5,2);

\node (rank2) at (-2,2.75) [] {\textbf{2}};
\node (omega11) at (0,3) [label=below:\mbox{$\{\VGF\}$}] {$\bullet$} ;
\node (omega12) at (2,3) [label=below:\mbox{$\{\NLF,\VGF\}$}] {$\bullet$} ;
\node (omega13) at (4,3)  [label=below:\mbox{$\emptyset$}] {$\bullet$};
\node (omega14) at (6,3)  [label=below:\mbox{$\{\NLF\}$}] {$\bullet$} ;
\node (omega15) at (8,3)  [label=below:\mbox{$\{\VGF\}$}] {$\bullet$} ;
\node (omega16) at (10,3)  [label=below:\mbox{$\{\NLF,\VGF\}$}] {$\bullet$} ;

\draw (-1,3.5) [BlueViolet,rounded corners=5pt,thick] -- (11,3.5) -- (11,1.75) -- (9,1.75) -- (9,0.6) -- (5,0.6) -- (5,1.75)-- (-1,1.75)-- cycle ;

\draw (-0.8,3.25) [BurntOrange,rounded corners=5pt,thick] -- (3,3.25) -- (3,1.85) -- (8.5,1.85) -- (8.5,0.4) -- (1,0.4) -- (1,-1)-- (-0.8,-1)-- cycle ;

\node (sigmaEng) at (11.5,3.5) []{\textcolor{BlueViolet}{$\sigma(\EngF)$}};

\node (sigmaPhy) at (0,-1.5) []{\textcolor{BurntOrange}{$\sigma(\PhyF)$}};


\end{tikzpicture}

\end{center}

    \caption{SPSS for the Rational Closure of $\KB_1=\{\dnec_\PhyF \NLF, \NLF\twiddle\VGF, \EngF\lesssim\PhyF, \dnec_\EngF \neg \VGF\}$}
    \label{fig:PDSL-RC-Example}
\end{figure}

One of the most well-known and foundational rational entailment relations from propositional KLM style reasoning is rational closure \cite{lehmann:conditionalentail}, also known as System Z \cite{Pearl:SystemZ}. This system of reasoning rests semantically on the \textit{presumption of typicality}, which states that we should assume worlds in our models to be as typical as possible, unless our knowledge gives us reasons to believe otherwise. The semantic formalisation of rational closure can be characterised by a proposition given by Giordano et al. \cite{giordano:semanticRC}.

\begin{proposition}\cite{giordano:semanticRC}
    Let $\KB\subseteq \LangProp$ be a propositional conditional knowledge base. Then define the ordering $<$ on ranking functions over $\mathcal{P}$ by $\R_1<\R_2$ iff $\R_1(u)<\R_2(u)$ for all $u\in 2^\mathcal{P}$. The set of ranking functions which model $\KB$ $\mathscr{R}_\KB:=\{\R:2^\mathcal{P}\longrightarrow \mathbb{N}\cup\{\infty\}\mid \R\Vdash\KB \}$ has a unique $<$-minimal element. 
\end{proposition}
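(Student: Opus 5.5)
The plan is to build the minimal element explicitly as the pointwise minimum of all models of $\KB$. I read the ordering in the intended (Giordano et al.) sense: $\R_1\leq\R_2$ pointwise, with $\R_1<\R_2$ meaning $\R_1\leq\R_2$ and $\R_1\neq\R_2$. Under the literal "strictly smaller at every $u$" reading, minimality would be much weaker, and uniqueness would be hard to state sensibly.

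First, $\mathscr{R}_\KB$ is nonempty: the function that is constantly $\infty$ induces an empty interpretation, which vacuously satisfies every conditional. Define
\[\R_{\min}(u):=\min\{\R(u)\mid \R\in\mathscr{R}_\KB\}\quad\text{for each } u\in 2^\mathcal{P}.\]
This is well defined because $\mathbb{N}\cup\{\infty\}$ is well-ordered, even if $\mathscr{R}_\KB$ is infinite. By construction $\R_{\min}\leq\R$ for every $\R\in\mathscr{R}_\KB$. Hence, once we know $\R_{\min}\in\mathscr{R}_\KB$, any $<$-minimal $\R$ satisfies $\R_{\min}\leq\R$ and so equals $\R_{\min}$. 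This gives both existence and uniqueness. Two things therefore remain: $\R_{\min}$ satisfies $\KB$, and $\R_{\min}$ is a genuine ranking function, i.e.\ it has no gaps.

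For satisfaction, fix $\alpha\ptwiddle\beta\in\KB$. Let $u$ be an $\alpha$-valuation of minimal finite $\R_{\min}$-rank $k$; there are finitely many valuations, and if there are none the conditional holds vacuously. Suppose for contradiction that $u\Vdash\neg\beta$. Choose $\R'\in\mathscr{R}_\KB$ with $\R'(u)=k$. If $u$ is $\R'$-minimal among $\alpha$-valuations, then $u\Vdash\beta$ because $\R'\Vdash\alpha\ptwiddle\beta$, which is a contradiction. Otherwise some $\R'$-minimal $\alpha$-valuation $v$ has $\R'(v)<k$. Then $\R_{\min}(v)\leq\R'(v)<k$, contradicting the choice of $u$. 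So $\min(\llbracket\alpha\rrbracket)\subseteq\llbracket\beta\rrbracket$ in $\I_{\R_{\min}}$.

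The main obstacle is the gap-freeness condition in the definition of ranking function. It is not obvious a priori, since the witnesses for the lower ranks in $\R'$ may have dropped even lower in $\R_{\min}$. I would handle it by compression. Let $\R^c$ send each finite value of $\R_{\min}$ to its position in the increasing enumeration of the finitely many finite values attained, and keep $\infty$ fixed. Then $\R^c$ is gap-free and induces the same strict order on the same set of worlds, so it satisfies exactly the same conditionals and hence $\KB$. It also satisfies $\R^c\leq\R_{\min}$ pointwise. The pointwise minimality argument above only used satisfaction of $\KB$, not gap-freeness; so if I define $\mathscr{R}_\KB$ with the gap condition relaxed, then $\R^c$ lies in this relaxed set. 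Consequently $\R_{\min}\leq\R^c$, which gives $\R^c=\R_{\min}$. Thus $\R_{\min}$ is already gap-free and lies in $\mathscr{R}_\KB$, completing the argument.
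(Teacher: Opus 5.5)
Your argument is correct. The paper gives no proof of this proposition and simply imports it from Giordano et al.

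As far as I can tell, the cited source reaches the result through the rational closure construction. They show that in any minimal (canonical) ranked model of $\KB$, the rank of every formula equals its rational-closure rank from the exceptionality sequence, and uniqueness follows from that identification.

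You instead show directly that the models of $\KB$ are closed under pointwise minimum. Your argument over minimal $\alpha$-valuations shows that $\R_{\min}$ satisfies $\KB$. The compression step ($\R^c\le\R_{\min}$ together with $\R^c\in\mathscr{R}_\KB$) gives gap-freeness, so $\R_{\min}$ is the least element of $\mathscr{R}_\KB$.

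This route is more elementary and self-contained. It also covers the degenerate case where the only model is the constant-$\infty$ function, which this paper's definition of ranking function admits. What it does not give you is the identification of this least element with rational closure. The paper relies on that identification when it names the element $\R^\KB_{RC}$ and appeals to the $\mathtt{RationalClosure}$ algorithm and its complexity. For that step you still need the Giordano et al.\ (or Lehmann--Magidor) machinery.

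Two small remarks.

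First, your reinterpretation of $<$ is necessary, not merely convenient. Under the literal reading, $\R_1<\R_2$ forces $\R_2(u)\geq 1$ for all $u$. By gap-freeness $\R_2$ is then constantly $\infty$. So every model other than the constant-$\infty$ one would be minimal, and uniqueness already fails for $\KB=\emptyset$ over a single atom.

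Second, the detour through a ``relaxed'' $\mathscr{R}_\KB$ is unnecessary. $\R^c$ is gap-free and satisfies $\KB$, so it lies in $\mathscr{R}_\KB$ itself, and $\R_{\min}\le\R^c$ follows straight from the definition of $\R_{\min}$.
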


We denote this unique element as $\R^\KB_{RC}$ and use it to characterise the rational closure entailment relation.

\begin{definition}
    The entailment relation $\dentails_{RC}$ is the relation determined by the ranking strategy $r_{RC}:2^{\LangProp}\longrightarrow\mathscr{R}$ such that $r_{rc}(\KB)=\R^\KB_{RC}$ for all $\KB\subseteq \LangProp$.
\end{definition}

Using our definition from the general case, $\dentails_{RC}$ is adapted to the conditional-PDSL case as follows.

\begin{definition}
    For $\KB\subseteq \textbf{Cond}(\LangdSt)$ and $\phi\in \textbf{NegCond}(\LangdSt)$ we write that $\KB \dentails_{RC} \phi \textit{ iff }T^{-1}(\R^\KB_{RC})\Vdash \phi$
\end{definition}

\begin{example}\label{example:first-RC-eg}
    Consider the knowledge base related to our previous Example \ref{example:introductory-example}:

    \[\KB_1=\{\dnec_\PhyF \NLF, \NLF\twiddle\VGF, \EngF\lesssim\PhyF, \dnec_\EngF \neg \VGF\}\]
    where $\PhyF$ and $\EngF$ denotes the physics and engineering standpoints respectively; $\NLF$ and $\VGF$ denotes the beliefs in Newton's laws and variable gravitational force, respectively. Translating this knowledge base via  Proposition \ref{proposition:translation-of-PDSL-KB-into-Prop-KB}, we obtain $T(\KB_1)=\{\PhyF\ptwiddle \NLF, \NLF\ptwiddle\VGF, \EngF\ptwiddle\PhyF, \EngF\ptwiddle\neg \VGF\}$. Figure \ref{fig:PDSL-RC-Example} depicts the SPSS which determines $\dentails_{RC}$ for $\KB_1$ in the PDSL case, determined by calculating $r_{RC}(T(\KB_1))$ and finding its preimage under the semantic translation function $T$. In the figure, each precisification is not given a unique name but denoted with a bullet point. The image of each precisification under $\tau$ is labelled as in Example \ref{example:SPSS-that-shows-p-entailment-is-weak}. The sets of $\sigma(\EngF)$ and $\sigma(\PhyF)$ are given by the labelled outlines in the diagrams. Finally, the ordering $\prec$ is given by a representative listing of ``ranks'' for precisifications on the left hand side of the diagram emulating the ranking function which induces the SPSS in the propositional case. For example, for any $\pi$ in line $0$ and any precisification $\pi'$ in line $1$ we have $\pi\prec\pi'$. As we can see in Figure \ref{fig:PDSL-RC-Example}, there is a unique minimal precisification $\pi$ in $\sigma(P)$ where $\tau(\pi)=\{\NLF,\VGF\}$. Thus, unlike in the case of $\vDash_P$, we can conclude $\KB_1\dentails_{RC} \dnec_\PhyF \VGF$. This shows that rational closure is indeed inferentially stronger than $\vDash_P$ for PDSL, and in particular, it avoids the problems shown in Example \ref{example:SPSS-that-shows-p-entailment-is-weak}, since in the absence of conflicting information, we can now conclude that $\PhyF$ inherits the belief in $\VGF$ from the sentences $\dnec_\PhyF \NLF$ and $\NLF\twiddle\VGF$. Although not included for space purposes, it is also the case that $\{\dnec_\PhyF \NLF,\Ast\lesssim\PhyF\}\dentails_{RC} \dnec_{\Ast}\NLF$, showing that sub-standpoints can inherit defeasibly held beliefs under rational closure, when there is no conflicting information.     On the other hand, we can see in Figure \ref{fig:PDSL-RC-Example} that the engineering perspective is usually a sub-standpoint of the physics standpoint, but as specified in $\KB_1$, does not inherit all the usual beliefs of ~ $\PhyF$. Simply from the knowledge in $\KB_1$, it is also unclear whether $\EngF$'s typical belief in $\neg\VGF$ is in conflict with the premise $\NLF$. On one hand, $\NLF\twiddle \VGF$ may intuitively suggest that a rejection of $\VGF$ is linked to a rejection of the overall premise of $\NLF$. On the other hand, it may be the case that the standpoint $\EngF$ rejects $\VGF$ as a conclusion of $
    \NLF$, but does not reject $\NLF$ itself (since $\NLF\twiddle \VGF$ is a defeasible implication). From the knowledge in $\KB_1$, this is hard to distinguish, and so from the model we conclude $\KB_1\ndentails_{RC} \dnec_\EngF \NLF$ since we cannot conclude that the engineering perspective still believes in Newton's laws as a whole. However, we are able to conclude that $\KB_1\dentails_{RC} \dposs_\EngF \NLF$. That is, we can say that although there is not evidence that the engineering perspective typically accepts $\NLF$, there is still a \emph{distinct possibility }that $\NLF$ holds.
    \end{example}

    The above example provides an intuitive look at the semantics and the extended expressive power of rational closure in the PDSL case. However, it is also worth noting the computational properties of rational closure in PDSL. As previously mentioned, we are able to utilise algorithms from the propositional case, and can use $\mathtt{DefeasibleEntail}$ in the most general case. In the case of rational closure, we are able to instead check rational closure in the PDSL case with a single call to the well-known $\mathtt{RationalClosure}$ algorithm \cite{freund:preferentilreasoning,lehmann:conditionalentail} which is $\textsc{P}^{\textsc{NP}}_\parallel$-complete \cite{lehmann:conditionalentail, LucasiewiczEiter:ComplexityFromDefaultReasoning}. Moreover, the translation from PDSL into propositional conditionals needed to make this call is linear. As a corollary of this we get the following.

    \begin{corollary}
  For $\KB\subseteq \textbf{Cond}(\LangdSt)$ and $\phi\in \textbf{NegCond}(\LangdSt)$, checking whether $\KB\dentails_{RC} \phi$ is $\textsc{P}^{\textsc{NP}}_\parallel$-complete.
    \end{corollary}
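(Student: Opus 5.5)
The plan is to prove membership in $\textsc{P}^{\textsc{NP}}_\parallel$ and hardness separately, transferring both from the propositional case through the translation $T$.

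\emph{Membership.} Given $\KB\subseteq\textbf{Cond}(\LangdSt)$ and a query $\phi\in\textbf{NegCond}(\LangdSt)$, first compute $T(\KB)$ in linear time. Strip the outer $\naf$ operators from $\phi$ to obtain a core conditional $\psi\in\textbf{Cond}(\LangdSt)$ with $\phi=\naf^k\psi$. Since $M\Vdash\naf\chi$ iff $M\nVdash\chi$, the operator $\naf$ behaves as genuine negation at the level of global satisfaction. For $k=1$ this is Proposition~\ref{proposition:naf-proposition}. For larger $k$ I would check directly that $\naf\chi$ is a formula of the form $\neg\Box_*\chi$, which holds globally iff it holds locally somewhere iff $\chi$ fails at some $\pi$. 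Since $\Box_*\chi$ is a global statement, the same argument iterates. Hence $T^{-1}(\R^{T(\KB)}_{RC})\Vdash\phi$ iff $T^{-1}(\R^{T(\KB)}_{RC})\Vdash\psi$ (parity of $k$ even) or its negation (odd).

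By Proposition~\ref{proposition:translation-of-PDSL-KB-into-Prop-KB}, applied to the model $T^{-1}(\R^{T(\KB)}_{RC})$, which exists by Lemma~\ref{lemma:rankings-are-in-the-image}, this reduces to $\R^{T(\KB)}_{RC}\Vdash T(\psi)$. That is, it reduces to the propositional question $T(\KB)\dentails_{RC}T(\psi)$. The $\mathtt{RationalClosure}$ algorithm decides this in $\textsc{P}^{\textsc{NP}}_\parallel$, and the class is closed under complement, so it can absorb the final flip.

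The satisfiability caveat in Lemma~\ref{lemma:rankings-are-in-the-image} needs handling. Satisfiability of $\KB$, in the sense that $T(\KB)$ has a model making each $s$ nonempty, can be decided with the same ranking computation: for each $s$, check that $T(\KB)\not\dentails_{RC} s\ptwiddle\bot$, using classical preservation. These are parallel NP queries, so they fit within the same class.

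\emph{Hardness.} Reduce propositional rational closure entailment, which is $\textsc{P}^{\textsc{NP}}_\parallel$-hard by \cite{LucasiewiczEiter:ComplexityFromDefaultReasoning}, to the PDSL problem. Given a propositional $\KB'$ and query $\alpha\ptwiddle\beta$, take $\mathcal{S}=\emptyset$, or a single dummy standpoint $s$ with no constraints. Map each $\gamma\ptwiddle\delta$ to $\gamma\twiddle_*\delta$, so that $T$ acts as the identity on these conditionals. One must confirm that the dummy standpoint does not alter the rational closure ranking restricted to $\mathcal{P}$; with $\mathcal{S}=\emptyset$ this is immediate. The reduction is then linear.

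The main obstacle is the membership side. It requires the iterated-$\naf$ argument and the treatment of unsatisfiable $\KB$, where $T^{-1}$ is undefined and one must fix a convention, for instance that everything is entailed. The core equivalence is routine given the earlier propositions.
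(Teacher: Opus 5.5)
Your proposal is correct and follows the paper's own route: translate linearly via $T$, make a single propositional rational-closure check of $T(\psi)$ against $T(\KB)$, and flip the answer for $\naf$-queries, which is allowed because $\textsc{P}^{\textsc{NP}}_\parallel$ is closed under complement. The paper leaves the hardness direction, nested $\naf$, and the unsatisfiable-$\KB$ case implicit, so your embedding of propositional knowledge bases via $\gamma\twiddle_*\delta$ with $\mathcal{S}=\emptyset$, your iterated-$\naf$ argument, and your satisfiability pre-check fill in details rather than take a different approach.
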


\subsection{Lexicographic Closure}

\begin{figure}

\begin{center}
\begin{tikzpicture}
[->,>=stealth',thick, auto,node_style/.style={circle,fill=black,minimum size=0.1mm},scale=0.8, every node/.style={scale=0.8}]


\node (rank0) at (-2,-0.25) [] {\textbf{0}};
\node (omega1) at (0,0) [label=below:\mbox{$\{\inertia,\VGF\}$}] {$\bullet$} ;
\node (omega2) at (2,0) [label=below:\mbox{$\emptyset$}] {$\bullet$} ;
\node (omega3) at (4,0)  [label=below:\mbox{$\{\VGF\}$}] {$\bullet$};
\node (omega4) at (6,0)  [label=below:\mbox{$\{\inertia,\VGF\}$}] {$\bullet$} ;
\node (omega10) at (8,0)  [label=below:\mbox{$\{\inertia\}$}] {$\bullet$} ;

\path  (-2.5,0.5) edge [dashed, -] (10.5,0.5);

\node (rank1) at (-2,1.25) [] {\textbf{1}};
\node (omega5) at (0,1.5) [label=below:\mbox{$\emptyset$}] {$\bullet$} ;
\node (omega6) at (2,1.5) [label=below:\mbox{$\{\inertia\}$}] {$\bullet$} ;
\node (omega7) at (4,1.5)  [label=below:\mbox{$\{\VGF\}$}] {$\bullet$};
\node (omega8) at (6,1.5)  [label=below:\mbox{$\emptyset$}] {$\bullet$} ;
\node (omega9) at (8,1.5)  [label=below:\mbox{$\{\inertia\}$}] {$\bullet$} ;

\path  (-2.5,2) edge [dashed, -] (10.5,2);

\node (rank2) at (-2,2.75) [] {\textbf{2}};
\node (omega11) at (0,3) [label=below:\mbox{$\{\VGF\}$}] {$\bullet$} ;
\node (omega12) at (2,3) [label=below:\mbox{$\{\inertia,\VGF\}$}] {$\bullet$} ;
\node (omega13) at (4,3)  [label=below:\mbox{$\emptyset$}] {$\bullet$};
\node (omega14) at (6,3)  [label=below:\mbox{$\{\inertia\}$}] {$\bullet$} ;
\node (omega15) at (8,3)  [label=below:\mbox{$\{\VGF\}$}] {$\bullet$} ;
\node (omega16) at (10,3)  [label=below:\mbox{$\{\inertia,\VGF\}$}] {$\bullet$} ;

\draw (-1,3.5) [BlueViolet,rounded corners=5pt,thick] -- (11,3.5) -- (11,1.75) -- (9,1.75) -- (9,0.6) -- (5,0.6) -- (5,1.75)-- (-1,1.75)-- cycle ;

\draw (-0.8,3.25) [BurntOrange,rounded corners=5pt,thick] -- (3,3.25) -- (3,1.85) -- (8.5,1.85) -- (8.5,0.4) -- (1,0.4) -- (1,-1)-- (-0.8,-1)-- cycle ;

\node (sigmaEng) at (11.5,3.5) []{\textcolor{BlueViolet}{$\sigma(\EngF)$}};

\node (sigmaPhy) at (0,-1.5) []{\textcolor{BurntOrange}{$\sigma(\PhyF)$}};


\end{tikzpicture}

\end{center}

    \caption{SPSS for the Rational Closure of $\KB_2=\{\dnec_\PhyF \inertia, \dnec_\PhyF \VGF, \EngF\lesssim\PhyF, \dnec_\EngF \neg \VGF\}$}
    \label{fig:PDSL-RC-Bad-Example}
\end{figure}

In the propositional case for KLM-style defeasible reasoning, rational closure can again be considered too restrictive. In the PDSL case, a similar case can be made.

\begin{example}
    Consider the following knowledge base similar to that in Example \ref{example:first-RC-eg}
    \[\KB_2=\{\dnec_\PhyF \inertia, \dnec_\PhyF \VGF, \EngF\lesssim\PhyF, \dnec_\EngF \neg \VGF\}\]

   Here instead of specifying standpoint $\PhyF$ believing in Newton's laws and variable gravitational force ($\VGF$) being a defeasible conclusion of this, we specify directly that the physics standpoint usually holds $\VGF$ ($\dnec_\PhyF \VGF$). We also specify that the physics standpoint holds another of Newton's laws to be true, the law of inertia $(\inertia)$: which states that objects will not change motion or direction unless acted upon by a force. Hence, we have $\dnec_\PhyF \inertia$. We once again have in $\KB_2$ that the engineering standpoint is usually a sharpening of the physics standpoint, but from an engineering perspective, gravitational force is usually not considered a variable. $\KB_2$ and $\KB_1$ are fairly similar. However, while a belief in Newton's laws $(\NLF)$ and variable gravitational force $(\VGF)$ is intuitively and logically connected, a beliefs $(\VGF)$ and $(\inertia)$ are not intuitively connected, since they are two separate laws of physics concerning different properties. They are also not logically connected in $\KB_2$. Therefore, while we may expect not to conclude $\dnec_\EngF \NLF$ from $\KB_1$, it seems reasonable to conclude $\dnec_\EngF \inertia$ from $\KB_2$. Although the engineering standpoint is exceptional within the physics standpoint with respect to $\VGF$, there is nothing to suggest it should be exceptional with respect to $\inertia$. Since engineering is still usually a sub-standpoint of physics, it should stand to inherit the beliefs of the physics standpoint where there is no contradictory information.   However, when computing the rational closure of $\KB_2$, we obtain Figure \ref{fig:PDSL-RC-Bad-Example}, which is almost identical to Figure \ref{fig:PDSL-RC-Example}, up to replacing $\NLF$ with $\inertia$. In particular, we have that $\KB_2\ndentails_{RC}\dnec_\EngF \inertia$, and so rational closure is not nuanced enough to consider the logical differences between $\KB_1$ and $\KB_2$.
\end{example}

The above example shows an instance in PDSL of the \textit{drowning problem} \cite{BenferhatCDLP93}, a well-known issue for rational closure in the propositional case.  Here, the fact that the engineering standpoint is atypical in the physics standpoint with respect to $\VGF$ blocks the engineering from inheriting any other typical beliefs from physics, even though they may not be relevant to the reason engineers are atypical in the standpoint of physics. It has been shown in the propositional case that various well-known systems of rational entailment avoid the drowning problem (see for example \cite{Jesse-Drowning}). Here, we show that one of the best known entailment relations with this property, lexicographic closure \cite{lehmann:conditionalentail} is able to circumvent this problem in the PDSL case as well. Lexicographic closure has a number of characterizations. We use the one given in \cite{casini:beyondratclosure}. For a propositional conditional knowledge base  $\KB\subseteq \LangProp$, let $C^\KB:2^\mathcal{P}\longrightarrow\mathbb{N}$ be a function defined by $C^\KB(u)=|\{\alpha\ptwiddle\beta\mid u\Vdash \alpha\rightarrow\beta \}|$, where $|\cdot|$ denotes the cardinality of the set.

\begin{definition}
   Let $\KB\subseteq\LangProp$ be a propositional conditional knowledge base over $\mathcal{P}$. Let $C^\KB:2^\mathcal{P}\longrightarrow\mathbb{N}$ be a function defined by $C^\KB(u)=|\{\alpha\ptwiddle\beta\mid u\Vdash \alpha\rightarrow\beta \}|$. We define the preferential interpretation $\I_{LC}^{\KB}=(W,l,\prec^\KB_{LC})$ by $W=2^\mathcal{P}\setminus \{u\in 2^\mathcal{P}\mid {R}_{RC}^\KB(u)=\infty\}$, $l$ is the identity and $v\prec^\KB_{LC} u$ if $\mathcal{R}_{RC}^\KB(v)<\mathcal{R}_{RC}^\KB(u)$, or $\mathcal{R}_{RC}^\KB(v)=\mathcal{R}_{RC}^\KB(u)$ and $C^\KB(v)>C^\KB(u)$.
\end{definition}

   We say $\KB\dentails_{LC}\phi$ for $\phi\in\LangProp$ iff $\I_{LC}^{\KB}\Vdash \phi$. Furthermore, we note that $\I_{LC}^{\KB}$ can be expressed by a ranking function $\R_{LC}^{\KB}:2^\mathcal{P}\longrightarrow\mathbb{N}\cup\{\infty\}$ where $\R_{LC}^{\KB}(u)=0$ iff $u$ is $\prec_{LC}^{\KB}$-minimal in $W$, $\R_{LC}^{\KB}(u)=\infty$ if  $\R_{RC}^{\KB}(u)=\infty$, and $\R_{LC}^{\KB}(u)$ is the length of the $\prec_{LC}^{\KB}$-path from any $\prec_{LC}^{\KB}$-minimal element $v$ to $u$, otherwise. We use the above definition to specify lexicographic closure for conditional PDSL.

\begin{definition}
    For $\KB\subseteq \textbf{Cond}(\LangdSt)$ and $\phi\in \textbf{NegCond}(\LangdSt)$ we write that $\KB \dentails_{LC} \phi \textit{ iff }T^{-1}(\I^\KB_{LC})\Vdash \phi$.
\end{definition}

\begin{figure}

\begin{center}
\begin{tikzpicture}
[->,>=stealth',thick, auto,node_style/.style={circle,fill=black,minimum size=0.1mm},scale=0.8, every node/.style={scale=0.8}]


\node (rank0) at (-2,-0.25) [] {\textbf{0}};
\node (omega1) at (0,0) [label=below:\mbox{$\{\inertia,\VGF\}$}] {$\bullet$} ;
\node (omega2) at (2,0) [label=below:\mbox{$\emptyset$}] {$\bullet$} ;
\node (omega3) at (4,0)  [label=below:\mbox{$\{\inertia\}$}] {$\bullet$};
\node (omega4) at (6,0) [label=below:\mbox{$\{\VGF\}$}] {$\bullet$} ;
\node (omega5) at (8,0)  [label=below:\mbox{$\{\inertia,\VGF\}$}] {$\bullet$} ;

\path  (-2.5,0.5) edge [dashed, -] (9.5,0.5);

\node (rank1) at (-2,1.25) [] {\textbf{1}};
\node (omega6) at (0,1.5) [label=below:\mbox{$\{\inertia\}$}] {$\bullet$} ;
\node (omega7) at (2,1.5)  [label=below:\mbox{$\{\VGF\}$}] {$\bullet$};
\node (omega8) at (4,1.5)  [label=below:\mbox{$\{\inertia\}$}] {$\bullet$} ;

\path  (-2.5,2) edge [dashed, -] (9.5,2);

\node (rank2) at (-2,2.75) [] {\textbf{2}};
\node (omega9) at (0,3)  [label=below:\mbox{$\emptyset$}] {$\bullet$} ;
\node (omega10) at (2,3)  [label=below:\mbox{$\emptyset$}] {$\bullet$} ;

\path  (-2.5,3.5) edge [dashed, -] (9.5,3.5);

\node (rank3) at (-2,4.25) [] {\textbf{3}};
\node (omega11) at (0,4.5) [label=below:\mbox{$\{\inertia,\VGF\}$}] {$\bullet$} ;
\node (omega12) at (2,4.5) [label=below:\mbox{$\{\inertia\}$}] {$\bullet$} ;
\node (omega13) at (4,4.5)  [label=below:\mbox{$\emptyset$}] {$\bullet$};

\path  (-2.5,5) edge [dashed, -] (9.5,5);

\node (rank4) at (-2,5.75) [] {\textbf{4}};
\node (omega14) at (0,6)  [label=below:\mbox{$\{\VGF\}$}] {$\bullet$} ;
\node (omega15) at (2,6)  [label=below:\mbox{$\{\inertia,\VGF\}$}] {$\bullet$} ;
\node (omega16) at (4,6)  [label=below:\mbox{$\{\VGF\}$}] {$\bullet$} ;

\draw (-1,6.5) [BlueViolet,rounded corners=5pt,thick] -- (5,6.5) -- (5,0.6) -- (3,0.6) -- (3,2.1) -- (1,2.1) -- (1,3.6)-- (-1,3.6)-- cycle ;

\draw (-0.8,6.25) [BurntOrange,rounded corners=5pt,thick] -- (1.2,6.25) -- (1.2,3.25) -- (2.8,3.25) -- (2.8,2.2) -- (4.8,2.2) -- (4.8,0.4)-- (1,0.4) --(1,-1)-- (-0.8,-1)-- cycle ;

\node (sigmaEng) at (6,6) []{\textcolor{BlueViolet}{$\sigma(\EngF)$}};

\node (sigmaPhy) at (0,-1.5) []{\textcolor{BurntOrange}{$\sigma(\PhyF)$}};


\end{tikzpicture}

\end{center}

    \caption{SPSS for the Lexicographic Closure of $\KB_2=\{\dnec_\PhyF \inertia, \dnec_\PhyF \VGF, \EngF\lesssim\PhyF, \dnec_\EngF \neg \VGF\}$}
    \label{fig:PDSL-LC-Example}
\end{figure}

\begin{example}
    The SPSS characterizing the lexicographic closure of $\KB_2$ is given in Figure \ref{fig:PDSL-LC-Example}. As we can see, the ordering is more fine-grained than that in Figure \ref{fig:PDSL-RC-Bad-Example}. In particular, in Figure \ref{fig:PDSL-LC-Example} there is only one $\prec$-minimal precisification $\pi$ in $\sigma(\EngF)$ with $\tau(\pi)\Vdash \inertia$. Therefore, we have $\KB_2\dentails_{LC}~\dnec_\EngF\inertia$~ as desired.
\end{example}

Hence, lexicographic closure helps us avoid the drowning problem in PDSL. We obtain as corollaries some other well-known results from propositional setting. Firstly,  lexicographic closure is stronger than rational closure in the following sense: for $\KB\subseteq\LangProp$ and $\phi\in\LangProp$ we have that $\KB\dentails_{RC}\phi$ implies that $\KB\dentails_{LC}\phi$ \cite{lehmann:lexicographicreason}. This result relies on the fact that the order on the lexicographic model is a refinement of that in the rational closure model. Since the same model-theoretic relationship holds via the translation to the conditional PDSL case we obtain the following. For a conditional PDSL knowledge base $\KB\subseteq\textbf{Cond}(\LangdSt)$ and $\phi\in\textbf{NegCond}(\LangdSt)$: $\KB\dentails_{RC}\phi$ implies that $\KB\dentails_{LC}\phi$. Once again, we can use a single call to the propositional algorithm for computing lexicographic closure in order to determine lexicographic closure in the PDSL case. The algorithm for lexicographic closure is known to be $\textsc{P}^{\textsc{NP}}$-complete \cite{LucasiewiczEiter:ComplexityFromDefaultReasoning}. Therefore, this complexity is preserved for our case.

    \begin{corollary}
  For $\KB\subseteq \textbf{Cond}(\LangdSt)$ and $\phi\in \textbf{NegCond}(\LangdSt)$, checking whether $\KB\dentails_{LC} \phi$ is $\textsc{P}^{\textsc{NP}}$-complete.
    \end{corollary}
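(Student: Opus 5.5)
The plan is to derive the corollary from the translation machinery already developed and the known propositional complexity results, proving the upper bound and the hardness separately.

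\emph{Upper bound.} Given $\KB\subseteq\textbf{Cond}(\LangdSt)$ and $\phi\in\textbf{NegCond}(\LangdSt)$, first compute $T(\KB)$ and, stripping the outer $\naf$ operators, $T(\phi_0)$, where $\phi=\naf^k\phi_0$ with $\phi_0\in\textbf{Cond}(\LangdSt)$. This takes linear time. Satisfiability of $\KB$ is then handled via Lemma~\ref{lemma:rankings-are-in-the-image}, so that $T^{-1}(\I^{T(\KB)}_{LC})$ is a well-defined SPSS. Note that $\I_{LC}$ is induced by a ranking function $\R^{\KB}_{LC}$ refining $\R^{\KB}_{RC}$, so it agrees with $\R_{RC}$ on which valuations have rank $\infty$; hence the lemma's argument for well-defined ranking strategies applies. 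By Proposition~\ref{proposition:translation-of-PDSL-KB-into-Prop-KB}, $T^{-1}(\I_{LC})\Vdash\phi_0$ iff $\I_{LC}\Vdash T(\phi_0)$. By Proposition~\ref{proposition:naf-proposition}(2), each $\naf$ flips the answer. The question therefore reduces to a single propositional lexicographic-closure query $T(\KB)\dentails_{LC}T(\phi_0)$, followed by at most a parity flip. That query is in $\textsc{P}^{\textsc{NP}}$ by \cite{LucasiewiczEiter:ComplexityFromDefaultReasoning}, and $\textsc{P}^{\textsc{NP}}$ is closed under complement and polynomial-time preprocessing, which gives membership.

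\emph{Hardness.} I would reduce propositional lexicographic-closure entailment to the PDSL version. Given a propositional conditional knowledge base $\KB'$ and a query $\alpha\ptwiddle\beta$ over $\mathcal{P}$, map each conditional $\gamma\ptwiddle\delta$ to $\gamma\twiddle_*\delta$. Since $tr(*)=\top$, we get $T(\gamma\twiddle_*\delta)=\gamma\wedge\top\ptwiddle\delta$, which is logically equivalent to the original. There is one subtlety: the vocabulary must be taken with $\mathcal{S}=\emptyset$, or else with some standpoint symbols that must be nonempty. With $\mathcal{S}=\emptyset$ the translated vocabulary is exactly $\mathcal{P}$, the image of $T$ is all preferential interpretations, and the rankings coincide. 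Lexicographic closure is syntax-dependent only through counting of conditionals, and the map is a bijection on conditionals, so $C^{\KB}$ is preserved. Hence $\KB'\dentails_{LC}\alpha\ptwiddle\beta$ iff $\{\gamma\twiddle_*\delta\}\dentails_{LC}\alpha\twiddle_*\beta$, which transfers $\textsc{P}^{\textsc{NP}}$-hardness.

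The main obstacle is the unsatisfiable or degenerate case. If $T(\KB)$ has no ranked model of the required shape, $T^{-1}$ may be undefined; for example, a standpoint symbol whose translated atom receives rank $\infty$ everywhere violates $\sigma(s)\neq\emptyset$. I would fix a convention, either that everything is entailed or that the check reports unsatisfiability. I would then note that deciding whether some $s$ is forced to rank $\infty$ costs only polynomially many NP-oracle calls, each of the form $T(\KB)\dentails \neg s\ptwiddle\bot$ via classical preservation. This extra work stays within $\textsc{P}^{\textsc{NP}}$.
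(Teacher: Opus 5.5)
Your proposal follows the paper's own route: translate linearly via $T$, answer the query with a single propositional lexicographic-closure call (flipping the answer once per $\naf$), and inherit $\textsc{P}^{\textsc{NP}}$-completeness from the propositional result; your $\twiddle_*$ embedding with $\mathcal{S}=\emptyset$ spells out the hardness direction, and your satisfiability discussion covers the degenerate case, both of which the paper leaves implicit. One small slip in that side remark: the test for $\sigma(s)$ being forced empty is $T(\KB)\vDash_{pref} s\ptwiddle\bot$ (the translation of $\top\twiddle_s\bot$, as in the proof of Lemma~\ref{lemma:rankings-are-in-the-image}), not $\neg s\ptwiddle\bot$.
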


\section{Related Work and Conclusions}

Another approach to integrating KLM-style defeasibility and standpoint logics is given by Defeasible Restricted Standpoint logic (DRSL), introduced by Leisegang et al. \cite{LeisegangRudolphMeyer:SacairStandpoints}. DRSL captures standpoints which hold beliefs that are defeasible conditionals in nature, while PDSL captures classical beliefs which are held defeasibly by standpoints. In particular, the preference ordering in PDSL semantics is defined over the set of precisifications $\Pi$, whereas in DRSL semantics, there is no ordering on $\Pi$ and instead each precisification $\pi\in\Pi$ is mapped to a preferential interpretation. As such, DRSL semantics do not express defeasible modalities, while PDSL was constructed explicitly in order to express such modalities. Gorczyca and Straß \cite{GS2026} consider another method of non-monotonic reasoning with standpoint modalities, by integrating standpoints and the non-monotonic modal logic S4F. Preferential semantics and defeasible modalities have been considered in the modal logic \textbf{K} by Britz et al.  \cite{britzvarzin:defeasiblemodalities} and in linear temporal logic by Chafik et al. \cite{chafik:defeasiblelineartemporal}, although these focus on monotonic entailment. Outside of modal logics, KLM-style rational defeasible entailments have been also been lifted to Description Logics. Britz et al. \cite{britz2020principles} consider the general case, while rational closure \cite{giordano:semanticRC,britz2020principles,PenselTurhan18-defeasibleEL} and lexicographic closure \cite{casini2012lexicographic,HaldmiannMeyerCasini-LexWDLs-KR2025} have also been considered specifically.

In this paper, we have proposed a systematic means to lift rational defeasible entailment relations from the propositional case to a fragment of PDSL. In particular, we made the case that the preferential entailment introduced in PDSL by Leisegang et al. \cite{JELIA-NickIvanTommie} was unnecessarily inferentially weak, and sought to remedy this by introducing non-monotonic inferences from the propositional case into PDSL. In doing so, we enhanced the expressivity of PDSL through \textit{standpoint situated conditionals}, showing that the syntax of PDSL may be rephrased in terms of these conditionals. We then identified the fragment $\textbf{Cond}(\LangdSt)$ of PDSL consisting of just conditionals and showed that this fragment can be translated into traditional KLM conditional knowledge bases, while preserving model satisfaction. Using this translation, we lifted the class of rational defeasible entailment relations to  $\textbf{Cond}(\LangdSt)$ in the general case, before focussing on the well-known systems of rational and lexicographic closure in the conditional-PDSL case. In all these cases, we showed that propositional algorithms can be used to compute the conditional-PDSL case, and hence that the complexity of reasoning in conditional-PDSL is the same as in the propositional case.

\begin{acknowledgments}
This work is based on the research supported in part by the National Research Foundation of South Africa (REFERENCE NO: SAI240823262612). The construction of standpoint situated conditionals was inspired by discussions with Hannes Straß. Thanks also to the anonymous reviewers for their comments.
\end{acknowledgments}

\section*{Declaration on Generative AI}
  The authors have not employed any Generative AI tools.

\bibliography{defeasiblestandpointmodalities}

\newpage
\appendix

\section{Appendix: Omitted Proofs}

\noindent\textbf{Proposition \ref{proposition:unintuitve-behaviour-with-modalities}}     \textit{     Consider $\alpha,\beta,\gamma\in \LangdSt$ such that $\alpha= \#_e \gamma$ or $\alpha=e\lesssim d$ for $e,d\in \mathcal{E}$ and $\#_e\in \{\Box_e,\Diamond_e,\dnec_e,\dposs_e\}$. Then, for any SPSS $M$ we have that: 
    \begin{enumerate}
        \item If there is not $\pi\in\Pi$ such that $M,\pi\nVdash \alpha$, then $M\Vdash \alpha\leadsto \beta$. 
        \item If  there exists $\pi\in\Pi$ such that $M,\pi\Vdash \alpha$, then $M\Vdash \alpha \leadsto \beta \Leftrightarrow M\Vdash \top\leadsto \beta \Leftrightarrow M\Vdash {\dnec_*} \beta$.
        \item If ~$\llbracket\beta\rrbracket\neq\emptyset$, then $M\Vdash \beta\twiddle\alpha$ iff $M\Vdash \alpha$.
    \end{enumerate}}

\begin{proof}

We first note that $M\Vdash \alpha\twiddle\beta$ iff. for any $\pi\in\Pi$ $\pi\notin \min_\prec\llbracket\alpha\rrbracket$ or$ \pi\in \llbracket\beta\rrbracket$. That is, iff $\min_\prec \llbracket\alpha\rrbracket\subseteq \llbracket\beta\rrbracket$.

    \begin{enumerate}
        \item If there is no $\pi\in \Pi$ with $M,\pi\Vdash \alpha$ then $\llbracket\alpha\rrbracket=\emptyset$ and so $\min_
    \prec\llbracket\alpha\rrbracket=\emptyset$. It follows trivially that $\min_\prec \llbracket\alpha\rrbracket\subseteq \llbracket\beta\rrbracket$.
    \item If there exists $\pi\in \Pi$ such that $M,\pi\Vdash \alpha$ we note the following: satisfaction for statements of the form $\alpha$ are defined in terms of precisifications in $\sigma(e)$ and do not depend on $\pi$. For example, if $\alpha=\dnec_e\gamma$ then $M,\pi\Vdash \dnec_e\gamma$ iff $M,\pi'\Vdash \gamma$ for all $\pi'\in\min_\prec \sigma(e)$. Since this condition does not depend on the choice original precisification $\pi$, we have that $M,\pi\Vdash \alpha$ iff. $M\Vdash \alpha$. The proof that this holds for the other cases of $\alpha$ are similar. Therefore, if $M,\pi\Vdash \alpha$ then $M\Vdash \alpha$ and so $\llbracket\alpha\rrbracket=\Pi$. By definition, we also have $\Pi=\sigma(*)=\llbracket\top\rrbracket$ Hence, $M\Vdash \alpha\twiddle\beta$ iff. $\min_\prec\sigma(*)=\min_\prec \llbracket\top\rrbracket=\min_\prec \llbracket\alpha\rrbracket\subseteq \llbracket\beta\rrbracket$. The condition that $\min_\prec \llbracket\top\rrbracket\subseteq \llbracket\beta\rrbracket$ is equivalent to $M\Vdash \top\twiddle\beta$, and the condition $\min_\prec\sigma(*)\subseteq \llbracket\beta\rrbracket$ is equivalent to $M\Vdash \dnec_*\beta$.
    \item Clearly, if $M\Vdash \alpha$ then $M\Vdash\beta\twiddle\alpha$. On the other hand $M\Vdash \beta\twiddle \alpha$ iff $\min_\prec\llbracket\beta\rrbracket\subseteq \llbracket\alpha\rrbracket$. Since $\llbracket\beta\rrbracket$ is non-empty, then for the above to hold we must have that $\llbracket\alpha\rrbracket$ is non-empty. However, from proof of part 2. of the theorem, we know that if $\llbracket\alpha\rrbracket$ is non-empty, then $\llbracket\alpha\rrbracket=\Pi$ and so $M\Vdash \alpha$.
    \end{enumerate}
\end{proof}

\noindent\textbf{Proposition \ref{proposition:we-can-express-PDSL-in-terms-of-situated-conditionals} }\textit{ Let $M$ be an SPSS, $\alpha,\beta\in \LangdSt$ and $e\in \mathcal{E}$. Then,
\begin{enumerate}
        \item $M,\pi\Vdash \alpha\twiddle\beta$ iff. $M,\pi\Vdash \alpha\twiddle_*\beta$.
        \item $M, \pi\Vdash \dnec_e \alpha$ iff. $M,\pi\Vdash \Box_*(\top \twiddle_e \alpha)$.
        \item $M, \pi\Vdash \Box_e \alpha$ iff. $M,\pi\Vdash \Box_*(\neg \alpha\twiddle_{e} \bot )$.
    \end{enumerate} }

    \begin{proof}
        \begin{enumerate}
            \item Note that since $\sigma(*)=\Pi$ we have that $\llbracket\alpha\rrbracket\cap\sigma(*)=\llbracket\alpha\rrbracket$. Hence, $M,\pi\Vdash \alpha\twiddle\beta$ iff $\pi\notin \min_\prec(\llbracket\alpha\rrbracket)=\min_\prec(\llbracket\alpha\rrbracket\cap\sigma(*)$ or $\pi\in \llbracket\beta\rrbracket$. Equivalently, $M,\pi\Vdash \alpha\twiddle_*\beta$.
            \item $M,\pi\Vdash \Box_*(\top\twiddle_e\alpha)$ iff $M,\pi'\Vdash \top\twiddle_e\alpha$ for all $\pi'\in \Pi$. This is the case iff $\min_\prec(\llbracket\top\rrbracket\cap\sigma(e))\subseteq \llbracket\alpha\rrbracket$. Since $\llbracket\top\rrbracket=\Pi$, this is equivalent to $\min_\prec(\sigma(e))\subseteq \llbracket\alpha\rrbracket$. Then finally note that $\min_\prec(\sigma(e))\subseteq \llbracket\alpha\rrbracket$ is equivalent to $M,\pi''\Vdash \alpha$ for all $\pi''\in \min_\prec\sigma(e)$. That is, $M,\pi\Vdash \dnec_e\alpha$.
            \item Note that, similar to the previous case, $M,\pi\Vdash \Box_*(\neg \alpha\twiddle_{e} \bot )$ iff $\min_\prec(\sigma(e)\cap\llbracket\neg\alpha\rrbracket)\subseteq \llbracket\bot\rrbracket$. Then, since $\llbracket\bot\rrbracket=\emptyset$ it follows that $\sigma(e)\cap\llbracket\neg\alpha\rrbracket=\emptyset$. Hence, for any $\pi'\in \sigma(e)$ we must have that $\pi'\notin\llbracket\neg\alpha\rrbracket$. Therefore $\tau(\pi')\Vdash \alpha$. Hence, $M,\pi'\Vdash \alpha$ for all $\pi'\in \sigma(e)$ and so $M,\pi\Vdash \Box_e\alpha$. On the other hand, if $M,\pi\Vdash \Box_e\alpha$ then we must have that $\sigma(e)\subseteq \llbracket\alpha\rrbracket$. Hence $\sigma(e)\cap\llbracket\neg\alpha\rrbracket=\emptyset=\llbracket\bot\rrbracket$, and so $M\Vdash \neg\alpha\twiddle_e \bot$. Equivalently, $M\Vdash \Box_*(\neg\alpha\twiddle_e \bot)$ and so $M,\pi\Vdash \Box_*(\neg\alpha\twiddle_e \bot)$.
        \end{enumerate}
    \end{proof}

    \noindent\textbf{Proposition \ref{proposition:situated-standpoint-condiitionals-satisfy-KLM-postulates}}\textit{     For any SPSS $M$, any Boolean formulas $\alpha,\beta$ and $\gamma$ and any standpoint symbol $e$ we have that:
    \begin{enumerate}
        \item $M\Vdash \alpha\twiddle_e\alpha$.
        \item If $M\Vdash \Box_e(\alpha\leftrightarrow \beta)$ and $M\Vdash \alpha\twiddle_e\gamma$ then $M\Vdash \beta\twiddle_e \gamma$.
        \item If $M\Vdash \alpha\twiddle_e \beta$ and $M\Vdash\Box_e(\beta\rightarrow\ \gamma)$ then $M\Vdash \alpha\twiddle_e \gamma$.
        \item If $M\Vdash \alpha\twiddle_e \beta$ and $M\Vdash \alpha\twiddle_e \gamma$ then $M\Vdash \alpha\twiddle_e \beta\wedge \gamma$    \item If $M\Vdash \alpha\twiddle_e \beta$ and $M\Vdash \gamma\twiddle_e \beta$ then $M\Vdash \alpha\vee \gamma\twiddle_e \beta$.
        \item If $M\Vdash \alpha\twiddle_e \beta$ and $M\Vdash \alpha\twiddle_e \gamma$ then  $M\Vdash \alpha\wedge \beta\twiddle_e \gamma$.
        \end{enumerate}}

    \begin{proof}
    Let $M=(\Pi,\sigma,\tau,\prec)$ be an SPSS:
        \begin{enumerate}
            \item In any SPSS $\min_\prec(\sigma(e)\cap\llbracket\alpha\rrbracket)\subseteq \llbracket\alpha\rrbracket$ and so $M\Vdash \alpha\twiddle_e\alpha$.
            \item If $M\Vdash \alpha\twiddle_e \gamma$ we have $\min_\prec(\sigma(e)\cap\llbracket\alpha\rrbracket)\subseteq \llbracket\gamma\rrbracket$. Then, since $M\Vdash \Box_e(\alpha\leftrightarrow\beta)$ we have that $\tau(\pi)\Vdash \alpha\leftrightarrow\beta$ for all $\pi\in\sigma(e)$. Hence, if $\pi\in \sigma(e)$ then $\tau(\pi)\Vdash \alpha$ iff $\tau(\pi)\Vdash \beta$. That is, $\sigma(e)\cap\llbracket\alpha\rrbracket=\sigma(e)\cap\llbracket\beta\rrbracket$. It follows that $\min_\prec(\sigma(e)\cap\llbracket\beta\rrbracket)=\min_\prec(\sigma(e)\cap\llbracket\alpha\rrbracket)\subseteq \llbracket\gamma\rrbracket$ and so $M\Vdash \beta \twiddle_e\gamma$.
            \item If $M\Vdash \Box_e(\beta\rightarrow\gamma)$ we have that $\tau(\pi)\Vdash \beta\rightarrow \gamma$ for all $\pi\in \sigma(e)$. Since $M\Vdash \alpha \twiddle_e \beta$ we have $\min_\prec(\sigma(e)\cap\llbracket\alpha\rrbracket)\subseteq \llbracket\beta\rrbracket$. But then, for any $\pi'\in\min_\prec(\sigma(e)\cap\llbracket\alpha\rrbracket)$ we have that $\tau(\pi')\Vdash \beta$. But then, since $\pi'\in\sigma(e)$ we have that $\tau(\pi')\Vdash \beta\rightarrow\gamma$ and so $\tau(\pi')\Vdash \gamma$. That is, $\min_\prec(\sigma(e)\cap\llbracket\alpha\rrbracket)\subseteq \llbracket\gamma\rrbracket$ and so $M\Vdash \alpha\twiddle_e\gamma$.
            \item If $M\Vdash \alpha\twiddle_e \beta$ and $M\Vdash \alpha\twiddle_e \gamma$ we have that $\min_\prec(\sigma(e)\cap\llbracket\alpha\rrbracket)\subseteq \llbracket\beta\rrbracket$ and $\min_\prec(\sigma(e)\cap\llbracket\alpha\rrbracket)\subseteq \llbracket\gamma\rrbracket$. It is well known that $\llbracket\beta\rrbracket\cap\llbracket\gamma\rrbracket=\llbracket\beta\wedge\gamma\rrbracket$ and so $\min_\prec(\sigma(e)\cap\llbracket\alpha\rrbracket)\subseteq \llbracket\beta \wedge\gamma\rrbracket$. Therefore $M\Vdash \alpha\twiddle_e \beta\wedge \gamma$.
            \item Suppose that $\min_\prec(\sigma(e)\cap\llbracket\alpha\rrbracket)\subseteq \llbracket\beta\rrbracket$ and $\min_\prec(\sigma(e)\cap\llbracket\gamma\rrbracket)\subseteq \llbracket\beta\rrbracket$. We have from well-known results that $\sigma(e)\cap\llbracket\alpha \vee \gamma\rrbracket=\sigma(e)\cap(\llbracket\alpha\rrbracket\cup \llbracket\gamma\rrbracket)=((\sigma(e)\cap\llbracket\alpha\rrbracket)\cup (\sigma(e)\cap\llbracket\gamma\rrbracket))$. Then if $\pi\in \min_\prec(\sigma(e)\cap\llbracket\alpha \vee \gamma\rrbracket)$ it follows that $\pi\in \min_\prec((\sigma(e)\cap\llbracket\alpha\rrbracket)\cup (\sigma(e)\cap\llbracket\gamma\rrbracket))\subseteq (\sigma(e)\cap\llbracket\alpha\rrbracket)\cup (\sigma(e)\cap\llbracket\gamma\rrbracket)$. 
            
            Now suppose that $\pi\notin \min_\prec(\sigma(e)\cap\llbracket\alpha\rrbracket)$. Then $\pi\notin\sigma(e)\cap\llbracket\alpha\rrbracket$, since if it were, it would not be minimal and so there would exist $\pi'$ such that $\pi'\in\sigma(e)\cap\llbracket\alpha\rrbracket\subseteq (\sigma(e)\cap\llbracket\alpha\rrbracket)\cup (\sigma(e)\cap\llbracket\gamma\rrbracket)$ and $\pi'\prec \pi$, which contradicts that  $\pi\in \min_\prec((\sigma(e)\cap\llbracket\alpha\rrbracket)\cup (\sigma(e)\cap\llbracket\gamma\rrbracket))$. Hence, $\pi\in \sigma(e)\cap\llbracket\gamma\rrbracket$. Moreover, $\pi\in \min_\prec(\sigma(e)\cap\llbracket\gamma\rrbracket)$ since if it were not minimal in $\sigma(e)\cap\llbracket\gamma\rrbracket$ it would once again contradict that $\pi$ is $\prec$-minimal in $(\sigma(e)\cap\llbracket\alpha\rrbracket)\cup (\sigma(e)\cap\llbracket\gamma\rrbracket)$. 
            
            Therefore, if $\pi\in \min_\prec((\sigma(e)\cap\llbracket\alpha\rrbracket)\cup (\sigma(e)\cap\llbracket\gamma\rrbracket))$ either $\pi\in \min_\prec(\sigma(e)\cap\llbracket\alpha\rrbracket)\subseteq \llbracket\beta\rrbracket$ or $\pi\in \min_\prec(\sigma(e)\cap\llbracket\gamma\rrbracket)\subseteq \llbracket\beta\rrbracket$. In either case, $\pi\in\llbracket\beta\rrbracket$. That is, $\min_\prec(\sigma(e)\cap\llbracket\alpha \vee \gamma\rrbracket)= \min_\prec((\sigma(e)\cap\llbracket\alpha\rrbracket)\cup (\sigma(e)\cap\llbracket\gamma\rrbracket))\subseteq \llbracket\beta\rrbracket$. Hence $M\Vdash \alpha\vee\gamma\twiddle_e\beta$.

            \item If $M\Vdash \alpha\twiddle_e\beta$ and $M\Vdash \alpha\twiddle_e\gamma$ then we have that $\min_\prec(\sigma(e)\cap\llbracket\alpha\rrbracket)\subseteq \llbracket\beta\rrbracket$ and $\min_\prec(\sigma(e)\cap\llbracket\alpha\rrbracket)\subseteq \llbracket\gamma\rrbracket$. Consider $\pi\in \min_\prec(\sigma(e)\cap\llbracket\alpha\wedge \beta\rrbracket)=\min_\prec(\sigma(e)\cap\llbracket\alpha\rrbracket\cap\llbracket \beta\rrbracket)\subset \sigma(e)\cap\llbracket\alpha\rrbracket$. If $\pi\notin \min_\prec(\sigma(e)\cap\llbracket\alpha\rrbracket)$ then since $\pi\in \sigma(e)\cap\llbracket\alpha\rrbracket$ we must have that there exists $\pi'\in \min_\prec(\sigma(e)\cap\llbracket\alpha\rrbracket)$ such that $\pi'\prec \pi$. However, since $\min_\prec(\sigma(e)\cap\llbracket\alpha\rrbracket)\subseteq \llbracket\beta\rrbracket$  it follows that $\pi'\in \sigma(e)\cap\llbracket\alpha\rrbracket\cap\llbracket \beta\rrbracket$. However, this contradicts the assumption that $\pi\in \min_\prec(\sigma(e)\cap\llbracket\alpha\rrbracket\cap\llbracket \beta\rrbracket)$. Therefore, we must have that $\pi\in\min_\prec(\sigma(e)\cap\llbracket\alpha\rrbracket)$. Therefore,  $\min_\prec(\sigma(e)\cap\llbracket\alpha\rrbracket\cap\llbracket \beta\rrbracket)\subseteq \min_\prec(\sigma(e)\cap\llbracket\alpha\rrbracket)\subseteq \llbracket\gamma\rrbracket$ and so we have that $M\Vdash \alpha\wedge \beta\twiddle_e\gamma$.
        \end{enumerate}
    \end{proof}

    \noindent\textbf{Proposition \ref{proposition:translation-of-PDSL-KB-into-Prop-KB}} \textit{ For any $\phi\in \textbf{Cond}(\LangdSt)$ and any SPSS $M$ we have that 
    \[M\Vdash \phi \textit{ iff. }T(M)\Vdash T(\phi)\]}

    \begin{proof}
        We begin with a preliminary lemma that shows that for each $e\in\mathcal{E}$, $\sigma(e)=\llbracket t(e)\rrbracket_{T(M)}$. This is proved inductively:
        \begin{itemize}
            \item If $e=s$ for $s\in \mathcal{S}$ then $\llbracket t(s)\rrbracket_{T(M)}=\llbracket s\rrbracket_{T(M)}$. Then $\pi\in \llbracket s\rrbracket_{T(M)}$ iff $s\in l'(\pi)$ which is true iff $\pi\in \sigma(s)$. Hence, $\llbracket t(s)\rrbracket_{T(M)}=\sigma(s)$.
            \item If $e=*$ then $\llbracket t(*)\rrbracket_{T(M)}=\llbracket \top\rrbracket_{T(M)}=\Pi=\sigma(*)$.
            \item Let $e=g\cap d$ for $g,d\in\mathcal{E}$. Then  $\llbracket t(g\cap d)\rrbracket_{T(M)}= \llbracket t(g)\wedge t(d)\rrbracket_{T(M)}=\llbracket t(g)\rrbracket_{T(M)}\cap \llbracket(d)\rrbracket_{T(M)}$. Then by our inductive hypothesis $\llbracket t(g)\rrbracket_{T(M)}\cap \llbracket(d)\rrbracket_{T(M)}=\sigma(g)\cap\sigma(d)=\sigma(g\cap d)$.
            \item If $e=-g$ then $\llbracket t(-g)\rrbracket_{T(M)}= \llbracket \neg t(g)\rrbracket_{T(M)}=\Pi\setminus \llbracket t(g)\rrbracket_{T(M)}$. By inductive hypothesis $\Pi\setminus \llbracket t(g)\rrbracket_{T(M)}=\Pi\setminus \sigma(g)=\sigma(-g)$.
        \end{itemize}
    Therefore for each $e\in\mathcal{E}$, $\sigma(e)=\llbracket t(e)\rrbracket_{T(M)}$.
    
        For our main result we consider two cases
        \begin{itemize}
            \item[a. ] If $\phi=\alpha\twiddle_e\beta$ then $T(\phi)=\alpha\wedge t(e)\ptwiddle \beta$. Thus $M\Vdash \phi$ iff we have that $\min_\prec(\sigma(e)\cap\llbracket\alpha\rrbracket_M \subseteq \llbracket\beta\rrbracket_M$. Clearly, for any Boolean formula $\alpha$ with atoms in $\mathcal{P}$, $\llbracket\alpha\rrbracket_M=\llbracket\alpha\rrbracket_{T(M)}$. Therefore $\sigma(e)\cap\llbracket\alpha\rrbracket_M =\llbracket  t(e)\rrbracket_{T(M)}\cap\llbracket\alpha\rrbracket_{T(M)}=\llbracket t(e)\wedge\alpha\rrbracket_{T(M)}$, and $\llbracket\beta\rrbracket_M=\llbracket\beta\rrbracket_{T(M)}$. Furthermore, since $\prec'=\prec$ we have that $\min_\prec(\sigma(e)\cap\llbracket\alpha\rrbracket_M \subseteq \llbracket\beta\rrbracket_M$ is equivalent to $\min_{\prec'}(\llbracket t(e)\wedge\alpha\rrbracket_{T(M)})\subseteq \llbracket\beta\rrbracket_{T(M)}$. This in turn is equivalent to $T(M)\Vdash t(e)\wedge \alpha\ptwiddle\beta$.
            \item[b.] If $\phi=e\lesssim d$ then $T(\phi)=t(e)\ptwiddle t(d)$. We gave that $M\Vdash \phi$ iff $\min_\prec\sigma(e)\subseteq \sigma(d)$. By our preliminary lemma and the fact that $\prec'=\prec$, this is equivalent to $\min_\prec'\llbracket t(e)\rrbracket_{T(M)}\subseteq \llbracket t(d) \rrbracket_{T(M)}$. Finally, this is equivalent to $T(M)\Vdash t(e)\ptwiddle t(d)$.
        \end{itemize}
    \end{proof}

    \noindent\textbf{Proposition \ref{proposition:T-injective-on-models}. } \textit{$T$ is injective on models.}

    \begin{proof}
        Suppose that $(W,l,\prec'')=T(M)=T(M')$ where $M=(\Pi,\sigma,\tau,\prec)$ and $M'=(\Pi',\sigma',\tau',\prec')$. Then we have by definition that $\Pi'=W=\Pi$ and $\prec'=\prec''=\prec$. Then, from the definition of $T(M)$, it is clear that for any $s\in \mathcal{S}$, $\pi\in \sigma(s)$ iff $s\in l(\pi)$. Similarly, $\pi\in \sigma'(s)$ iff $s\in l(\pi)$. Therefore $\sigma(s)=\sigma'(s)$ for all $s\in\mathcal{S}$. Finally, for any $\pi\in\Pi$, $\tau(\pi)=l(\pi)\setminus\{s\in\mathcal{S}\mid \pi\in\sigma(s)\}=l(\pi)\setminus\{s\in\mathcal{S}\mid \pi\in\sigma'(s)\}=\tau'(\pi)$ and we are done.
    \end{proof}

    \noindent\textbf{Lemma \ref{lemma:rankings-are-in-the-image}. }\textit{ If $\KB$ is a satisfiable conditional PDSL knowledge base and $r$ is a well-defined ranking strategy, the preferential interpretation $r(T(\KB))$ is in the semantic image of $T$. }

\begin{proof}
    Since $r$ is a well-defined defeasible ranking strategy, then $r(T(\KB))$ satisfies classical preservation. In particular, for any $s\in\in\mathcal{S}$ we have that  $r(T(\KB))\Vdash s\ptwiddle \bot$ iff. $T(\KB)\vDash_{pref} s\ptwiddle \bot$. That is, every preferential interpretation of $T(\KB)$ satisfies $s\twiddle\bot$.
    
    However, since $\KB$ is satisfiable, there exists an SPSS $M=(\Pi,\sigma,\tau,\prec)$ such that $M\Vdash \KB$. Since by definition $\sigma(s)\neq\empty$. That is, $M\nVdash \Box_s\bot$ or equivalently (by Proposition \ref{proposition:we-can-express-PDSL-in-terms-of-situated-conditionals}) $M\nVdash \top\twiddle_s\bot$. Now note that by Proposition \ref{proposition:translation-of-PDSL-KB-into-Prop-KB} we have that $T(M)\Vdash T(\phi)$ for all $\phi\in \KB$. That is, $T(M)$ is a model of $T(\KB)$. Proposition \ref{proposition:translation-of-PDSL-KB-into-Prop-KB} also tells us that $M\nVdash \top\twiddle_s\bot$ implies that $T(M)\nVdash T(\top\twiddle_s\bot)=\top\wedge s\ptwiddle \bot$. Equivalently $T(M)\nVdash s\ptwiddle \bot$. Therefore, since $T(M)$ is a model of $T(\KB)$ we must have that $T(\KB)\nvDash_{pref} s\ptwiddle \bot$ and by classical preservation $r(T(\KB))\nVdash s\ptwiddle \bot$ for any $s\in\mathcal{S}$. Thus, $\llbracket s\rrbracket_{r(T(\KB))}\neq \emptyset$ for any $s\in\mathcal{S}$ and so $r(T(\KB))$ is in the image of $T$.
\end{proof}

\noindent\textbf{Proposition \ref{proposition:naf-proposition}. }\textit{ Given an SPSS $M$ and $\phi\in \textbf{Cond}(\LangdSt)$ then we have:
    \begin{enumerate}
        \item $M\Vdash \neg \phi$ implies $M\Vdash \naf \phi$, and the converse does not hold in general.
        \item $M\Vdash \naf \phi$ iff. $M\nVdash \phi$.
    \end{enumerate}}

    \begin{proof}

    \begin{enumerate}
        \item If $M\Vdash \neg \phi$ then $M,\pi\Vdash \neg\phi$ for all $\pi\in \Pi$. Since $\Pi$ is non-empty (Definition \ref{def:standpoint-structure}), we can pick $\pi$ such that $M,\pi\Vdash \neg\phi$. Then, $M\Vdash \neg \Box_*\phi$ is equivalent to $M\Vdash \Diamond_*\neg \phi$ which is equivalent to saying that there exists $\pi'\in \sigma(*)=\Pi$ such that $M,\pi'\Vdash \neg \phi$. Then, since $M,\pi\Vdash \neg\phi$ this condition is satisfied. To show that the converse does not hold consider an SPSS $M$ such that $\Pi=\{\pi_1,\pi_2\}$ and $\tau(\pi_1)=\emptyset$ and $\tau(\pi_2)=\{p\}$. Then, $M\nVdash \neg(\bot\twiddle_* \neg p)$ since $M\Vdash \neg(\bot\twiddle_* \neg p)$is equivalent to $M\Vdash \neg p$ and clearly $M,\pi_2\nVdash \neg p$. However, $M\Vdash \neg\Box_*(p)$ since there exists $\pi_1\in \Pi$ such that $M,\pi_1\Vdash \neg p$.
        \item $M\nVdash \phi$ iff there exists some $\pi\in\Pi$ such that $M,\pi\nVdash \phi$, which is equivalent to $M,\pi\Vdash \neg \phi$. Then, as previously mentioned in the first part of the proof, $M\Vdash \naf \phi$ iff there is some $\pi\in \Pi$ such that $M,\pi\Vdash \neg \phi$ and so we are done.    
    \end{enumerate}
\end{proof}

\noindent\textbf{Proposition \ref{proposition:supraclassicality-for-box-and-Boolean-statements}. }\textit{ Let $\KB$ be a conditional PDSL knowledge base, and let $\alpha$ be a Boolean statement, and $r$ be a well-defined ranking strategy. Then $\KB\dentails_r \alpha$ iff $\KB\vDash_P \alpha$. Furthermore, $\KB\dentails_r\Box_e \alpha$ iff $\KB\vDash_P\Box_e \alpha$.}

\begin{proof}
    Clearly if $\KB\vDash_P \phi$ then $\KB\dentails_r \phi$ for any PDSL statement $\phi$. 

    For the converse, we consider the following: by Proposition \ref{proposition:we-can-express-PDSL-in-terms-of-situated-conditionals} we have that $T^{-1}(r(T(\KB))\Vdash \alpha$ iff $T^{-1}(r(T(\KB))\Vdash\neg\alpha\twiddle_* \bot$ and $T^{-1}(r(T(\KB))\Vdash \Box_e\alpha$ iff $T^{-1}(r(T(\KB))\Vdash\neg \alpha\twiddle_e \bot$. Using our translation and Proposition \ref{proposition:translation-of-PDSL-KB-into-Prop-KB} this is equivalent to $r(T(\KB))\Vdash \neg \alpha\ptwiddle\bot$ and  $r(T(\KB))\Vdash \neg \alpha\wedge t(e) \ptwiddle\bot$ respectively. By the classical preservation of $r$, we can once more reduce the above conditions to $T(\KB)\vDash_{pref} \neg \alpha\ptwiddle\bot$ and $T(\KB)\vDash_{pref} \neg \alpha\wedge t(e)\ptwiddle\bot$ respectively.

    Considering the above, for any SPSS $M$ which satisfies $\KB$ (i.e. $M\Vdash \KB$) we have that $T(M)\Vdash T(\KB)$ by Proposition \ref{proposition:translation-of-PDSL-KB-into-Prop-KB}. Hence $T(M)$ is a model of $T(\KB)$ and so by the previous paragraph $T(M)\Vdash \neg \alpha\ptwiddle\bot$ and $T(M)\Vdash \neg \alpha\wedge t(e) \ptwiddle\bot$. Once again by Proposition \ref{proposition:translation-of-PDSL-KB-into-Prop-KB} we obtain that this is equivalent to $M\Vdash \neg \alpha\twiddle \bot$ and $M\Vdash \neg \alpha\twiddle_e \bot$ respectively. Lastly, since this holds for any model of $\KB$, $M$, this in turn gives us $\KB\vDash_P \neg \alpha\twiddle \bot$ and $\KB\vDash_P \neg \alpha\twiddle_e \bot$ respectively, as desired.
\end{proof}

\noindent\textbf{Proposition \ref{proposition:diamonds-behave-like-brave-reasoning}. }\textit{ Let $\KB$ be a conditional PDSL knowledge base, and let $\alpha$ be a Boolean statement, and $r$ be a (valid) ranking strategy. Then $\KB\dentails_r \Diamond_e \alpha$ iff there exists a model of $\KB$, $M$, such that $M\Vdash \Diamond_e \alpha$.}

\begin{proof}
    Clearly, if $\KB\dentails_r \Diamond_e \alpha$ we have that $T^{-1}(r(T(\KB))$ is a model of $\KB$ such that $T^{-1}(r(T(\KB))\Vdash \Diamond_e \alpha$.

    We then prove the converse. Assume that there is some SPSS $M$ such that $M\Vdash \KB$ and $M\Vdash \Diamond_e \alpha$. Then, it follows that $M\nVdash \Box_e \neg \alpha$, or equivalently $M\nVdash \alpha\twiddle_e \bot$. By Proposition \ref{proposition:translation-of-PDSL-KB-into-Prop-KB}  we have that $T(M)\nVdash \alpha \wedge t(e)\ptwiddle \bot$. Since $T(M)$ is a model of $T(\KB)$ (by Proposition \ref{proposition:translation-of-PDSL-KB-into-Prop-KB}) then $T(\KB)\nvDash_{pref}\alpha \wedge t(e)\ptwiddle \bot$. Then by the classical preservation property of ranking strategies $r(T(\KB))\nVdash \alpha \wedge t(e)\ptwiddle \bot$ and so again utilizing Proposition \ref{proposition:translation-of-PDSL-KB-into-Prop-KB}, $T^{-1}(r(T(\KB)))\nVdash \alpha\twiddle_e\bot$ or equivalently $T^{-1}(r(T(\KB)))\nVdash \Box_e \neg \alpha$. From a well-known result of modal semantics, it follows that $T^{-1}(r(T(\KB)))\Vdash \Diamond_e \alpha$ and so $\KB\dentails_r \Diamond_e\alpha$.
\end{proof}
\end{document}